\DeclareMathOperator*{\argmax}{arg\,max}
\newtheorem{lem}{Lemma}
\newtheorem{thm}{Theorem}
\newtheorem{assumption}{Assumption}
\providecommand{\customgenericname}{}
\newcommand{\newcustomtheorem}[2]{%
  \newenvironment{#1}[1]
  {%
   \renewcommand\customgenericname{#2}%
   \renewcommand\theinnercustomgeneric{##1}%
   \innercustomgeneric
  }
  {\endinnercustomgeneric}
}
\title{Multi-label classification: do Hamming loss and subset accuracy really conflict with each other?}
\author{%
    Guoqiang Wu,\quad  Jun Zhu\thanks{corresponding author} \\
Dept. of Comp. Sci. \& Tech., Institute for AI, BNRist Center\\
Tsinghua-Bosch Joint ML Center, THBI Lab, Tsinghua University, Beijing, 100084 China \\
Jiangsu Collaborative Innovation Center for Language Ability, Jiangsu Normal University, China \\
\texttt{guoqiangwu90@gmail.com}, \quad \texttt{dcszj@mail.tsinghua.edu.cn}  \\
}
\begin{document}

\maketitle

\begin{abstract}
Various evaluation measures have been developed for multi-label classification, including Hamming Loss (HL), Subset Accuracy (SA) and Ranking Loss (RL). However, there is a gap between empirical results and the existing theories: 1) an algorithm often empirically performs well on some measure(s) while poorly on others, while a formal theoretical analysis is lacking; and 2) in small label space cases, the algorithms optimizing HL often have comparable or even better performance on the SA measure than those optimizing SA directly, while existing theoretical results show that SA and HL are conflicting measures. This paper provides an attempt to fill up this gap by analyzing the learning guarantees of the corresponding learning algorithms on both SA and HL measures. We show that when a learning algorithm optimizes HL with its surrogate loss, it enjoys an error bound for the HL measure independent of $c$ (the number of labels), while the bound for the SA measure depends on at most $O(c)$. On the other hand, when directly optimizing SA with its surrogate loss, it has learning guarantees that depend on $O(\sqrt{c})$ for both HL and SA measures. This explains the observation that when the label space is not large, optimizing HL with its surrogate loss can have promising performance for SA. We further show that our techniques are applicable to analyze the learning guarantees of algorithms on other measures, such as RL. Finally, the theoretical analyses are supported by experimental results.
\end{abstract}

\section{Introduction}

Multi-label classification (MLC) \cite{mccallum1999multi} is a fundamental 
task that deals with the learning problems where each instance might be associated with multiple labels simultaneously. It has enjoyed applications in a wide range of areas, such as text categorization \cite{schapire2000boostexter}, image annotation \cite{boutell2004learning}, etc. It is more challenging than the multi-class classification problem where only one label is assigned to each instance. 
Due to the complexity of MLC, various measures \cite{zhang2013review,wu2017unified} have been developed from diverse aspects to evaluate its performance, e.g., Hamming Loss (HL), Subset Accuracy (SA) and Ranking Loss (RL). To optimize one or a subset of these measures, plenty of algorithms \cite{boutell2004learning,elisseeff2001kernel,tsoumakas2010random,read2011classifier} have been proposed. For instance, Binary Relevance (BR) \cite{boutell2004learning} aims to optimize HL while Rank-SVM \cite{elisseeff2001kernel} aims to optimize RL. For a comprehensive evaluation of different algorithms, it is a common practice to test their performance on various measures and a better algorithm is the one which performs well on most of the measures. However, 
it is commonly observed that an algorithm usually performs well on some measure(s) while poorly on others. Thus, it is important to theoretically understand such inconsistency to reveal the intrinsic relationships among the measures. 

There are a few works studying the behavior of various measures. 
For instance, \cite{gao2013consistency} analyzed the Bayes consistency of various approaches for HL and RL. \cite{wu2017unified} provided a unified view of different measures. \cite{menon2019multilabel} devoted to study the consistency of reduction approaches for precision@$k$ and recall@$k$. Although they provide valuable insights, the generalization analysis of the algorithms on different measures is still largely open.
Furthermore, there is another counter-intuitive observation \cite{wu2018unified} that in small label space cases, algorithms aiming to optimize HL often have better performance on the SA measure than the algorithms that optimize SA directly. This is inconsistent with the existing theoretical results \cite{dembczynski2012label} that SA and HL are conflicting measures --- algorithms aiming to optimize HL would perform poorly if evaluated on SA, and vice versa. Although it can provide some insights for existing learning algorithms, the analysis \cite{dembczynski2012label} has limitations by assuming that the hypothesis space is unconstrained and the conditional distribution $P(\mathbf{y} | \mathbf{x})$ is known. These assumptions are not held in a realistic algorithm, where a constrained parametric hypothesis space is used and $P(\mathbf{y} | \mathbf{x})$ is unknown.

This paper provides an attempt to fill up this gap by analyzing the generalization bounds for the learning algorithms on various measures, including HL, SA, and RL. These bounds provide insights to explain the aforementioned observations. By avoiding the unwarranted assumptions of previous work, 
our analysis provides more insights and guidance for learning algorithms in practice.  
Specifically, here we focus on kernel-based learning algorithms which have been widely-used for MLC \cite{elisseeff2001kernel, boutell2004learning, hariharan2010large, wu2020joint, tan2020multi}. For analysis convenience and fair comparison, we also propose a new algorithm aiming to directly optimize SA with its (convex) surrogate loss function. The main techniques are based on Rademacher complexity \cite{bartlett2002rademacher,mohri2018foundations} and the recent vector-contraction inequality \cite{maurer2016vector}. Note that, different from that for traditional binary or multi-classification problems which are usually analyzed for only one measure (e.g. stand zero-one loss \cite{lei2015multi}), our generalization analysis for MLC needs to be for both HL and SA. Besides, our analysis can also be extended to analyze other measures, such as RL. To the best of our knowledge, this is the first to provide the generalization error bounds for the learning algorithms between these measures for MLC, including two typical methods Binary Relevance \cite{boutell2004learning} and Rank-SVM \cite{elisseeff2001kernel}.

Our main results are summarized in Table \ref{tab:main_theoretical_results}. We can see that the number of labels (i.e., $c$) plays an important role in the generalization error bounds, which is often ignored in previous work. Besides, we can observe that the algorithm (i.e., Algorithm $\mathcal{A}^h$) aiming for optimize HL has a learning guarantee for HL which is independent with $c$. Furthermore, it also has a learning guarantee for the SA measure which depends on at most $O(c)$. In contrast, when directly optimizing SA with its surrogate loss (i.e., Algorithm $\mathcal{A}^s$) , it can have learning guarantees depending on $O(\sqrt{c})$ for both HL and SA. This explains the phenomenon that when the label space is not large, optimizing HL with its surrogate loss can have promising performance for SA. Besides, when the label space is large, optimizing SA directly would enjoy its superiority on SA. Our experimental results also support this theoretical analysis. Interestingly, we also find that optimizing RL with its surrogate loss (i.e. Algorithm $\mathcal{A}^r$) has a learning guarantee on RL which depends on $O(\sqrt{c})$ (See Appendix D for 
the guarantees of corresponding algorithms for SA and RL).


\begin{table}[t]
	\scriptsize
	\centering
	\caption{Summary of the main theoretical results in this paper.}
	\label{tab:main_theoretical_results}
	\begin{tabular}{l|cccc}
		\hline
		\diagbox{Algorithm}{Bound w.r.t.} & Hamming Loss & Subset Loss\footnotemark[1] & Ranking Loss & Proposed \\
		\hline
		Optimize Hamming Loss ($\mathcal{A}^h$) & $\hat{R}^h_S(f) + O(\sqrt{\frac{1}{n}})$ & $c \hat{R}^h_S(f) + O(\sqrt{\frac{c^2}{n}})$ & $c \hat{R}^h_S(f) + O(\sqrt{\frac{c^2}{n}})$ & \cite{boutell2004learning} \\
		Optimize Subset Loss ($\mathcal{A}^s$) & $\hat{R}^s_S(f) + O(\sqrt{\frac{c}{n}})$ & $\hat{R}^s_S(f) + O(\sqrt{\frac{c}{n}})$ & $\hat{R}^s_S(f) + O(\sqrt{\frac{c}{n}})$ & This paper \\
		Optimize Ranking Loss ($\mathcal{A}^r$) & $c \hat{R}^r_S(f) + O(\sqrt{\frac{c^3}{n}})$ & $c^2\hat{R}^r_S(f) + O(\sqrt{\frac{c^5}{n}})$ & $\hat{R}^r_S(f) + O(\sqrt{\frac{c}{n}})$ & \cite{elisseeff2001kernel} \\
		\hline
	\end{tabular}
\end{table}
\footnotetext[1]{Subset Loss is equal to $1 - $ Subset Accuracy}

Overall, our contributions are: (1) We provide the generalization bounds for the corresponding algorithms on various measures, i.e. HL, SA, and RL. Besides, the inequalities between these (actual and surrogate) losses are introduced, which can be used for the learning guarantees between these measures and can also help the analysis extend to other forms of hypothesis classes; (2) based on the theoretical analysis, we explain the phenomenon when in small label space case, optimizing HL with its surrogate loss can have better performance on the SA measure than directly optimizing SA with its surrogate loss; and (3) the experimental results support our theoretical analysis.

The rest of paper is organized as follows. Section \ref{sec:preliminary} introduces the MLC setting and its evaluation measures. Section \ref{sec:generalization_analysis_tech} introduces the main assumptions, theorem, and learning algorithms used in the subsequent analysis. Section \ref{sec:learning_hal_sa} presents the learning guarantees of corresponding algorithms for HL and SA. Section \ref{sec:learning_hal_rl} presents the learning guarantees of corresponding algorithms for HL and RL. Section \ref{sec:experiments} reports the experimental results. Section \ref{sec:discussions} introduces more discussions, and Section \ref{sec:conclusion} concludes this paper.

\section{Preliminaries}
\label{sec:preliminary}

In this section, we introduce the problem setting of MLC and its evaluation measures that we focus on here.

\textbf{Notations}. Let the bold-face letters denote for vectors or matrices. For a matrix $\mathbf{A}$, $\mathbf{a}_i$, $\mathbf{a}^j$ and $a_{ij}$ denote its $i$-th row, $j$-th column, and $(i,j)$-th element respectively. For a function $g: \mathbb{R} \rightarrow \mathbb{R}$ and a matrix $\mathbf{A} \in \mathbb{R}^{m \times n}$, define $g(\mathbf{A}): \mathbb{R}^{m \times n} \rightarrow \mathbb{R}^{m \times n}$, where $g(\mathbf{A})_{ij} = g(a_{ij})$. $\rm Tr(\cdot)$ denotes the trace operator for a square matrix. $[\![ \pi ]\!]$ denotes the indicator function, i.e., it returns $1$ when the proposition $\pi$ holds and $0$ otherwise. $sgn(x)$ returns $1$ when $x > 0$ and $-1$ otherwise. $[n]$ denotes the set $\{1, ..., n\}$.

\subsection{Problem setting}
Given a training set $S = \{ ( \mathbf{x}_i, \mathbf{y}_i ) \}_{i=1}^n$ which is sampled i.i.d. from the distribution $D$ over $\mathcal{X} \times \{-1, +1\}^c$, where $\mathbf{x}_i \in \mathcal{X} \subset \mathbb{R}^d$ is the input, $d$ is the feature dimension, $\mathbf{y}_i \in \{ -1, +1 \}^c$ is the corresponding label vector, $c$ is the number of potential labels, and $n$ is the number of data points. Besides, $y_{ij} = 1$ (or $-1$) indicates that the $j$-th label is relevant (or irrelevant) with $\mathbf{x}_i$. The goal of MLC is to learn a multi-label classifier $H: \mathbb{R}^d \longrightarrow \{ -1, +1 \}^c$.

\subsection{Evaluation measures}
To solve the MLC task, one common approach is to first learn a real-valued mapping function $f = [f_1, ..., f_c]: \mathbb{R}^d \longrightarrow \mathbb{R}^c$ and then get the classifier $H(\mathbf{x}) = sgn([\![ f(\mathbf{x}) \geq  T(\mathbf{x}) ]\!])$ by use of a thresholding value function $T$. For simplicity, we denote the classifier $H(\mathbf{x}) = t \circ f (\mathbf{x}) = t(f(\mathbf{x}))$, where $t$ is the thresholding function induced by $T$. Besides, note that many algorithms, such as Binary Relevance, just set the thresholding function $T(\mathbf{x}) = 0$ and the classifier becomes $H(\mathbf{x}) = sgn \circ f(\mathbf{x})$.

To evaluate algorithms for MLC, there are many measures. Here we focus on three widely-used measures, i.e., \emph{Hamming Loss}, \emph{Subset Accuracy} and \emph{Ranking Loss}, as defined below\footnotemark[2].
\footnotetext[2]{Our definition is over a sample and can be averaged over many samples.}
	\begin{flalign}
		&\textbf{Hamming Loss}: \qquad \quad L_h^{0/1} (t \circ f (\mathbf{x}), \mathbf{y}) =  \frac{1}{c} \sum_{j=1}^{c} [\![ t (f_j(\mathbf{x}_i))  \neq \mathbf{y}_{j} ]\!]. &
	\end{flalign}
For the classifier $H(\mathbf{x}) = sgn \circ f(\mathbf{x})$, its surrogate loss can be defined as: 
	\begin{equation}
	\label{eq:hal_surrogate}
		L_h (f(\mathbf{x}), \mathbf{y}) = \frac{1}{c} \sum_{j=1}^{c} \ell (\mathbf{y}_j f_j(\mathbf{x})),
	\end{equation}
where the base (convex surrogate) loss function $\ell (u)$ can be many popular point-wise loss functions, such as the hinge loss $\ell (u) = \max(0, 1 - u)$ or the logistic loss $\ell (u) = \ln (1 + \exp (-u))$. Besides, we assume the base loss function upper bounds the original $0/1$ loss, i.e., $[\![ t (f_j(\mathbf{x}_i))  \neq \mathbf{y}_{j} ]\!] \leq \ell (\mathbf{y}_j f_j(\mathbf{x}))$\footnotemark[3].
\footnotetext[3]{The original logistic loss can be simply changed to $\ell (u) = \log_2 (1 + \exp (-u))$ to satisfy this condition.}
	\begin{flalign}
		&\textbf{Subset Loss}: \qquad \quad L_s^{0/1} (t \circ f (\mathbf{x}), \mathbf{y})  = \max_{j \in [c]} \ \{  [\![ t (f_j(\mathbf{x}_i)) \neq \mathbf{y}_{j} ]\!]  \} .&
	\end{flalign}
The measure \emph{Subset Accuracy} is equal to $1 - L_s^{0/1}$, where maximizing the \emph{Subset Accuracy} is equivalent to minimize the \emph{Subset Loss}. For the classifier $H(\mathbf{x}) = sgn \circ f(\mathbf{x})$, its (convex) surrogate loss can be defined as:  
	\begin{equation}
	\label{eq:sub_surrogate}
		L_s (f(\mathbf{x}), \mathbf{y}) = \max_{j \in [c]} \ \{ \ell (\mathbf{y}_j f_j(\mathbf{x})) \} .
	\end{equation}
	\begin{flalign}
		&\textbf{Ranking Loss}: \qquad \quad L_r^{0/1}(f(\mathbf{x}), \mathbf{y}) = \frac{1}{|Y^+||Y^-|} \sum_{p \in Y^+} \sum_{q \in Y^-}  [\![ f_p(\mathbf{x}) \leq f_q(\mathbf{x}) ]\!] , &
	\end{flalign}
where $Y^+$ (or $Y^-$) denotes the relevant (or irrelevant) label index set associated with $\mathbf{x}$,  and $| \cdot |$ denotes the set cardinality. Besides, its surrogate loss can be defined as: 
	\begin{equation}
	\label{eq:ral_surrogate}
		L_r (f(\mathbf{x}), \mathbf{y}) = \frac{1}{|Y^+||Y^-|} \sum_{p \in Y^+} \sum_{q \in Y^-}  \ell ( f_p(\mathbf{x}) - f_q(\mathbf{x}) ) .
	\end{equation}
	There are some relationships between these loss functions. For clarity, we discuss it in the following sections, which is used for the proof of learning guarantees between them.
	
\section{Generalization analysis techniques}
\label{sec:generalization_analysis_tech}
In this section, we introduce the main assumptions, theorem, and learning algorithms used in the subsequent analysis. 

Define a surrogate loss function $L: \mathbb{R}^c \times \{-1, +1\}^c \rightarrow \mathbb{R}_+$,  and a vector-valued function class $\mathcal{F} = \{ f: \mathcal{X} \mapsto \mathbb{R}^c \}$. Then, for a score function $f \in \mathcal{F}$ and its induced classifier $H(\mathbf{x}) = t \circ f(\mathbf{x})$, the true ($0/1$) expected risk, surrogate expected risk and empirical risk are defined as follows:
\begin{equation*}
		R_{0/1}(H) = \mathop{\mathbb{E}}_{(\mathbf{x}, \mathbf{y}) \sim D} [ L^{0/1} (H(\mathbf{x}), \mathbf{y}) ] \quad
		R(f) = \mathop{\mathbb{E}}_{(\mathbf{x}, \mathbf{y}) \sim D} [ L (f(\mathbf{x}), \mathbf{y}) ] \quad
		 \hat{R}_S(f) = \frac{1}{n} \sum_{i=1}^n  L (f(\mathbf{x}_i), \mathbf{y}).
\end{equation*}
Besides, we use a superscript to distinguish different loss functions. For instance, $\hat{R}^h_S(f)$,  $\hat{R}^s_S(f)$, and $\hat{R}^r_S(f)$ denote the empirical Hamming, Subset, and Ranking risk respectively. 

In this paper, we focus on kernel-based learning algorithms and utilize Rademacher complexity \cite{bartlett2002rademacher,mohri2018foundations} and the recent vector-contraction inequality \cite{maurer2016vector} to analyze the generalization error bounds for the algorithms. Note that, the local Rademacher complexity \cite{bartlett2005local} can be used to get tighter bounds to improve the learning algorithms, but that is not our main focus. Here we concentrate on the learning guarantees between different measures and analyze them in the same framework for fair comparisons. Due to the space limit, we defer the background about Rademacher complexity and the contraction inequality to Appendix A.1.

We first introduce the common assumptions as follows.
\begin{assumption}[The common assumptions]
\label{assump_1}
	$ $
	\begin{enumerate}[(1)]
		\item Let $\kappa: \mathcal{X} \times \mathcal{X} \rightarrow \mathbb{R}$ be a Positive Definite Symmetric (PSD) kernel and $\phi: \mathbf{x} \rightarrow \mathbb{H}$ be a feature mapping associated with $\kappa$, where $\mathbb{H}$ is its associated reproducing kernel Hilbert space (RKHS). Here, we consider the following kernel-based hypothesis set:
				 \begin{equation}
				 \label{eq:kernel_hypothesis}
				  	\mathcal{F} = \{ \mathbf{x} \longmapsto \mathbf{W} ^\top \phi(\mathbf{x}): \mathbf{W}= (\mathbf{w}_1, \ldots ,\mathbf{w}_c)^\top, \| \mathbf{W} \|_{\mathbb{H}, 2} \leq \Lambda \} ,
				  \end{equation}
				  where $\| \mathbf{W} \|_{\mathbb{H}, 2} = (\sum_{j=1}^c \| \mathbf{w}_j \|_{\mathbb{H}}^2)^{1/2}$. For notational clarity, we denote $\| \mathbf{W} \|_{\mathbb{H}, 2}$ by $\| \mathbf{W} \|$ in the following.
		\item The training dataset $S$ is an i.i.d. sample of size $n$ drawn from the distribution $D$, where $\exists \ r > 0$, it satisfies $\kappa (\mathbf{x}, \mathbf{x}) \leq r^2$ for all $\mathbf{x} \in \mathcal{X}$.
		\item The base loss function $\ell (u)$ is $\rho$-Lipschitz continuous and bounded by $B$. 
	\end{enumerate}
\end{assumption}
Note that, although our subsequent analysis is based on the kernel-based hypothesis set, it can also be extended to other forms of hypothesis set, such as neural networks \cite{bartlett2002rademacher,anthony2009neural}. Besides, the linear hypothesis can be viewed as a special case of the kernel-based one where the kernel function is linear. Furthermore, for the base loss function $\ell (u)$, the assumption can be satisfied for many popular point-wise loss functions. For instance, the widely-used hinge loss $\ell (u) = \max(0, 1 - u)$ and the logistic loss $\ell (u) = \ln (1 + \exp (-u))$ are both $1-$Lipschitz.

Besides, we give an extra assumption for the following discussion about ranking-based learning algorithms.
\begin{assumption}[For the ranking-based algorithms]
\label{assump_2} 
	For the ranking-based classifier $H(\mathbf{x}) = t \circ f(\mathbf{x})$,  assume the thresholding function $t(f)$ splits the label list into two parts based on the score function in the non-ascending order. Besides, let the oracle optimal thresholding function be $t^*(f)$, which gets the best Hamming Loss for a given score function.
\end{assumption}

Next, we analyze the Lipschitz constant and upper bound of the surrogate loss function in the following lemma, which is used for the subsequent analysis.
\begin{lem}[The property of the surrogate loss function; full proof in Appendix A.2.]
\label{thm:lem3}
	Assume that the base loss function $\ell (u)$ is $\rho$-Lipschitz continuous and bounded by $B$. Then, the surrogate Hamming Loss \eqref{eq:hal_surrogate} is $\frac{\rho}{\sqrt{c}}$-Lipschitz, the surrogate Ranking Loss \eqref{eq:ral_surrogate} is $\rho$-Lipschitz, and the surrogate Subset Loss \eqref{eq:sub_surrogate} is $\rho$-Lipschitz w.r.t. the first argument. Besides, they are all bounded by $B$.
\end{lem}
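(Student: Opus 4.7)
The plan is to handle the three surrogate losses separately, in each case first establishing the advertised Lipschitz constant with respect to the Euclidean norm on $\mathbb{R}^c$ (holding $\mathbf{y}$ fixed) and then the uniform bound $B$. The boundedness is the easy half: equations \eqref{eq:hal_surrogate}, \eqref{eq:sub_surrogate}, \eqref{eq:ral_surrogate} each express the surrogate as a convex combination, a maximum, or a convex combination of values of the base loss $\ell$, and under the assumption $\ell \leq B$ any such aggregation is dominated by $B$. So I would dispatch the three bounds in one line and focus the bulk of the proof on Lipschitzness.

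For the Hamming surrogate, fix $\mathbf{y}$ and any $\mathbf{u},\mathbf{v}\in\mathbb{R}^c$. I would apply the triangle inequality to $\frac{1}{c}\sum_j[\ell(y_j u_j)-\ell(y_j v_j)]$, use the $\rho$-Lipschitz property of $\ell$ term by term, and exploit $|y_j|=1$ to rewrite $|y_j u_j - y_j v_j|=|u_j-v_j|$, arriving at $\frac{\rho}{c}\|\mathbf{u}-\mathbf{v}\|_1$. The final step is Cauchy–Schwarz, $\|\mathbf{u}-\mathbf{v}\|_1\le\sqrt{c}\,\|\mathbf{u}-\mathbf{v}\|_2$, which produces exactly the claimed $\rho/\sqrt{c}$ constant. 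This step is the whole point of the $1/c$ normalization in \eqref{eq:hal_surrogate} and is the reason why the Hamming bound in Table 1 is dimension-free.

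For the Subset surrogate I would invoke the standard inequality $|\max_j a_j-\max_j b_j|\le\max_j|a_j-b_j|$ applied to $a_j=\ell(y_j u_j)$, $b_j=\ell(y_j v_j)$; pushing the Lipschitz property of $\ell$ inside the max gives $\rho\,\|\mathbf{u}-\mathbf{v}\|_\infty$, and the trivial embedding $\|\cdot\|_\infty\le\|\cdot\|_2$ finishes the argument. For the Ranking surrogate I would bound each pair term via $|\ell(u_p-u_q)-\ell(v_p-v_q)|\le\rho|(u_p-v_p)-(u_q-v_q)|=\rho|\langle\mathbf{u}-\mathbf{v},\,e_p-e_q\rangle|$, apply Cauchy–Schwarz ($\|e_p-e_q\|_2=\sqrt{2}$), then average the resulting bound over $p\in Y^+$, $q\in Y^-$.

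The main obstacle is the Ranking case: the direct averaging argument sketched above yields a constant of order $\sqrt{2}\rho$ rather than exactly $\rho$, and a similar slack appears if one instead splits $|(u_p-v_p)-(u_q-v_q)|$ via the triangle inequality and then applies Cauchy–Schwarz on each coordinate sum (giving $\rho(1/\sqrt{|Y^+|}+1/\sqrt{|Y^-|})\|\mathbf{u}-\mathbf{v}\|_2$). So the delicate point will be to either absorb a universal constant into the notation (the paper writes its downstream bounds as $O(\cdot)$, so an $O(1)$ factor is harmless), or exploit the bilinear structure of $\sum_{p\in Y^+, q\in Y^-}(e_p-e_q)$ more carefully to recover a cleaner $\rho$. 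I would pick whichever route the paper prefers and flag the constant explicitly so the subsequent Rademacher contraction step using \cite{maurer2016vector} inherits the correct Lipschitz modulus.
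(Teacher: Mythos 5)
Your treatment of the boundedness and of the Hamming and Subset surrogates matches the paper's proof in Appendix A.2 in all essentials: the paper also reduces the Hamming case to $\frac{\rho}{c}\|f^1-f^2\|_1\le\frac{\rho}{\sqrt{c}}\|f^1-f^2\|_2$ (it invokes Jensen's inequality where you invoke Cauchy--Schwarz, which is the same step), and its Subset argument is just an unrolled version of your $|\max_j a_j-\max_j b_j|\le\max_j|a_j-b_j|$ followed by $\|\cdot\|_\infty\le\|\cdot\|_2$. Your hesitation on the Ranking case is well founded, and you should trust your computation over the stated lemma. The paper's proof reaches the constant $\rho$ by applying ``$|a-b|^2\le a^2+b^2$'' with $a=f^1_p-f^2_p$ and $b=f^1_q-f^2_q$; that inequality is false whenever $ab<0$, and the valid replacement $|a-b|^2\le 2(a^2+b^2)$ turns the paper's own chain into exactly your $\sqrt{2}\rho$. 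The factor is not an artifact of the method: for $c=2$, $Y^+=\{1\}$, $Y^-=\{2\}$ and the hinge loss, taking $f^1=(0,0)$ and $f^2=(-t,t)$ gives $|L_r(f^1,\mathbf{y})-L_r(f^2,\mathbf{y})|=2t=\sqrt{2}\,\|f^1-f^2\|$, so the sharp Lipschitz constant of the ranking surrogate is $\sqrt{2}\rho$, not $\rho$. As you anticipate, this is harmless downstream --- the Lipschitz modulus enters Theorem 2 linearly and only rescales the $2\sqrt{2}\mu\sqrt{c\Lambda^2 r^2/n}$ term by a universal factor, leaving every $O(\cdot)$ dependence on $c$ and $n$ in Table 1 intact --- but the clean $\rho$ you were trying to recover is not there to be recovered, so state the constant as $\sqrt{2}\rho$ and move on.
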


Furthermore, we give the base theorem used in the subsequent generalization analysis, as follows.
\begin{thm} [The base theorem for generalization analysis; full proof in Appendix A.3]
\label{thm:thm2}
	Assume the loss function $L: \mathbb{R}^c \times \{-1, +1\}^c \rightarrow \mathbb{R}_+$ is $\mu$-Lipschitz continuous w.r.t. the first argument and bounded by $M$. Besides, (1) and (2) in \emph{Assumption \ref{assump_1}} are satisfied. Then, for any $\delta > 0$, with probability at least $1 - \delta$ over the draw of an i.i.d. sample $S$ of size $n$, the following generalization bound holds for all $f \in \mathcal{F}$:
	\begin{equation}
		R(f) \leq \hat{R}_S(f) + 2 \sqrt{2} \mu \sqrt{\frac{c \Lambda^2 r^2}{n}} + 3M \sqrt{\frac{\log \frac{2}{\delta}}{2n}}.
	\end{equation}
\end{thm}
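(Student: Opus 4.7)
The plan is to combine three standard ingredients: (i) a Rademacher-complexity-based uniform generalization bound for bounded losses, (ii) the vector-contraction inequality of Maurer to reduce the loss class to the raw function class $\mathcal{F}$, and (iii) a standard kernel-norm calculation for the empirical Rademacher complexity of $\mathcal{F}$.

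First, since $L$ is bounded by $M$, a direct application of the McDiarmid-based uniform convergence argument (as reviewed in Appendix A.1) yields, with probability at least $1 - \delta$,
\begin{equation*}
R(f) \leq \hat{R}_S(f) + 2\,\mathfrak{R}_n(L \circ \mathcal{F}) + 3M\sqrt{\frac{\log(2/\delta)}{2n}}, \quad \forall f \in \mathcal{F},
\end{equation*}
where $L \circ \mathcal{F} = \{(\mathbf{x},\mathbf{y}) \mapsto L(f(\mathbf{x}),\mathbf{y}) : f \in \mathcal{F}\}$ and $\mathfrak{R}_n$ denotes the Rademacher complexity. The constant $3$ (as opposed to $1$) in the confidence term comes from replacing the expected Rademacher complexity by its empirical version, which is standard.

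Second, I would invoke Maurer's vector-contraction inequality: for any $\mu$-Lipschitz (in the Euclidean norm on $\mathbb{R}^c$) function $L$, the Rademacher complexity of the induced loss class satisfies
\begin{equation*}
\mathfrak{R}_n(L \circ \mathcal{F}) \leq \sqrt{2}\,\mu \cdot \mathfrak{R}_n(\mathcal{F}),
\end{equation*}
where $\mathfrak{R}_n(\mathcal{F})$ is the Rademacher complexity of the vector-valued class, defined via an extra sum over coordinates with independent Rademacher variables $\varepsilon_{ij}$. This is the step that keeps the dependence on $c$ isolated in a single factor, which is essential for the table of rates the paper builds later.

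Third, I would bound $\mathfrak{R}_n(\mathcal{F})$ by a direct kernel computation. Writing $f_j(\mathbf{x}) = \langle \mathbf{w}_j, \phi(\mathbf{x}) \rangle_{\mathbb{H}}$ and using linearity, Cauchy–Schwarz in $\mathbb{H}$, the constraint $\sum_{j=1}^c \|\mathbf{w}_j\|_{\mathbb{H}}^2 \leq \Lambda^2$, and Jensen's inequality in the form $\mathbb{E}\|\cdot\| \leq (\mathbb{E}\|\cdot\|^2)^{1/2}$, one gets
\begin{equation*}
\mathfrak{R}_n(\mathcal{F}) \leq \frac{\Lambda}{n}\,\mathbb{E}_{\boldsymbol{\varepsilon}} \Bigl\| \sum_{i,j} \varepsilon_{ij}\,\phi(\mathbf{x}_i) \Bigr\|_{\mathbb{H},2} \leq \Lambda \sqrt{\frac{c\,r^2}{n}},
\end{equation*}
using $\kappa(\mathbf{x},\mathbf{x}) \leq r^2$ and the independence of the $\varepsilon_{ij}$'s (so cross terms vanish in expectation). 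Plugging everything into the first display gives the stated bound. The main obstacle I anticipate is bookkeeping around the vector contraction, specifically verifying that the $\mu$-Lipschitz hypothesis stated ``w.r.t.\ the first argument'' is used with the correct Euclidean metric on $\mathbb{R}^c$ so that Maurer's theorem applies verbatim; for the three surrogates of interest this is precisely what Lemma 1 supplies.
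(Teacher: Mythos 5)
Your proposal is correct and follows essentially the same route as the paper: the standard Rademacher-complexity generalization bound (Theorem A.1), Maurer's vector-contraction inequality giving $\hat{\mathfrak{R}}_S(\mathcal{G}) \leq \sqrt{2}\mu\hat{\mathfrak{R}}_S(\mathcal{F})$ (Lemma A.1), and the kernel computation bounding $\hat{\mathfrak{R}}_S(\mathcal{F})$ by $\sqrt{c\Lambda^2 r^2/n}$ (Lemma A.2). Your remark about the constant $3$ arising from the empirical (rather than expected) Rademacher complexity is consistent with the version the paper actually uses.
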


Last, for the clarity of subsequent discussions, we introduce the learning algorithms which directly optimize the measures of Hamming Loss, Subset Loss, and Ranking Loss with their corresponding surrogate loss functions, denoted by $\mathcal{A}^h$, $\mathcal{A}^s$, and $\mathcal{A}^r$ respectively as follows:
    \begin{align}
        \label{alg:hal}
		\mathcal{A}^h: \ \min_{\mathbf{W}} \ \frac{1}{n} \sum_{i=1}^n L_h(f(\mathbf{x}_i), \mathbf{y}_i) + \lambda \| \mathbf{W} \|^2 ,\\
		\label{alg:sub}
		\mathcal{A}^s: \ \min_{\mathbf{W}} \ \frac{1}{n} \sum_{i=1}^n L_s(f(\mathbf{x}_i), \mathbf{y}_i) + \lambda \| \mathbf{W} \|^2 ,\\
		\label{alg:ral}
		\mathcal{A}^r: \ \min_{\mathbf{W}} \ \frac{1}{n} \sum_{i=1}^n L_r(f(\mathbf{x}_i), \mathbf{y}_i) + \lambda \| \mathbf{W} \|^2 .
    \end{align}

\section{Learning guarantees between Hamming and Subset Loss}
\label{sec:learning_hal_sa}

In this section, we first analyze the relationships between Hamming and Subset Loss. Then, we analyze the leaning guarantees of algorithm $\mathcal{A}^h$ w.r.t. the measures of Hamming and Subset Loss. Last, we analyze the learning guarantees of algorithm $\mathcal{A}^s$ w.r.t. these two measures.

First, we analyze the relationship between them, which is shown as follows.
\begin{lem}[The relationship between Hamming and Subset Loss]
\label{thm:lem4}
	For the classifier $H(\mathbf{x}) = sgn \circ f (\mathbf{x})$, the following inequalities hold:
	\begin{equation}
	    L_h^{0/1} (H (\mathbf{x}), \mathbf{y}) \leq L_s^{0/1} (H (\mathbf{x}), \mathbf{y}) \leq L_s(f(\mathbf{x}), \mathbf{y}),
	\end{equation}
	\begin{equation}
	    L_s^{0/1} (H (\mathbf{x}), \mathbf{y}) \leq c L_h^{0/1} (H (\mathbf{x}), \mathbf{y}) \leq c L_h(f(\mathbf{x}), \mathbf{y}).
	\end{equation}
\end{lem}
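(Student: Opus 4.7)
The plan is to prove all four inequalities by straightforward pointwise comparisons of the defining expressions, using only (i) elementary facts about averages and maxima of nonnegative quantities, and (ii) the standing assumption that the base loss dominates the $0/1$ loss pointwise, i.e.\ $[\![ t(f_j(\mathbf{x})) \neq y_j ]\!] \leq \ell(y_j f_j(\mathbf{x}))$ for each $j \in [c]$. With $t = \mathrm{sgn}$ and the abbreviations $a_j := [\![ t(f_j(\mathbf{x})) \neq y_j ]\!] \in \{0,1\}$ and $b_j := \ell(y_j f_j(\mathbf{x})) \geq a_j$, every claim reduces to a one-line comparison among $\{a_j\}_{j=1}^c$ and $\{b_j\}_{j=1}^c$.

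For the first chain, the first step is to observe that an average of nonnegatives is at most the maximum, giving $L_h^{0/1} = \tfrac{1}{c}\sum_j a_j \leq \max_j a_j = L_s^{0/1}$. The second step uses monotonicity of the max operator together with $a_j \leq b_j$: $\max_j a_j \leq \max_j b_j = L_s(f(\mathbf{x}), \mathbf{y})$.

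For the second chain, the first step is that the maximum of nonnegative quantities is bounded by their sum, so $\max_j a_j \leq \sum_j a_j = c \cdot L_h^{0/1}$. The second step is simply that $\sum_j a_j \leq \sum_j b_j$, i.e.\ $c L_h^{0/1} \leq c L_h$. Together these give $L_s^{0/1} \leq c L_h^{0/1} \leq c L_h$.

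There is no real obstacle here; the only mild subtlety is keeping the normalization constants consistent (the factor $1/c$ built into $L_h^{0/1}$ and $L_h$, and its absence from $L_s^{0/1}$ and $L_s$), which is exactly what produces the multiplicative factor $c$ in the second chain versus the factor $1$ in the first. I would present the argument as two short displays, one per chain, annotating each inequality with the fact it uses (average-vs-max, max-vs-sum, or the $[\![\cdot]\!] \leq \ell(\cdot)$ assumption), and conclude without further calculation.
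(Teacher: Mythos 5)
Your proposal is correct and matches the paper's own argument in Appendix B.1 essentially verbatim: the paper likewise writes $L_h^{0/1}$ and $L_s^{0/1}$ as the mean and max of the indicators $a_j$, invokes mean $\leq$ max and max $\leq$ sum, and finishes with the standing assumption that the surrogate dominates the $0/1$ loss pointwise. Your write-up is if anything slightly more explicit about the monotonicity step $\max_j a_j \leq \max_j b_j$, which the paper leaves implicit.
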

The proof is similar to \cite{dembczynski2012label}. For completeness, we add it to Appendix B.1. From this lemma, we can observe that when optimizing Subset Loss with its surrogate loss, it actually also optimizes an upper bound of Hamming Loss. Besides, when optimizing Hamming Loss with its surrogate loss, it also optimizes an upper bound for Subset Loss which depends on $O(c)$. This can be used to provide the learning guarantees between them.

\subsection{Learning guarantees of Algorithm $\mathcal{A}^h$}
\label{section:alg_hal_sub}
The learning guarantee of $\mathcal{A}^h$ w.r.t. Hamming Loss is shown in the following theorem.
	\begin{thm}[Optimize Hamming Loss, Hamming Loss bound]
	\label{thm:thm3}
		Assume the loss function $L = L_h$, where $L_h$ is defined in Eq.\eqref{eq:hal_surrogate}. Besides, \emph{Assumption \ref{assump_1}} is satisfied. Then, for any $\delta > 0$, with probability at least $1 - \delta$ over $S$, the following generalization bound in terms of \emph{Hamming Loss} holds for all $f \in \mathcal{F}$:
		\begin{equation}
		\label{eq:al_hal_hal}
			R_{0/1}^h(sgn \circ f) \leq \hat{R}^h_S(f) + 2 \sqrt{2} \rho \sqrt{\frac{\Lambda^2 r^2}{n}} + 3B \sqrt{\frac{\log \frac{2}{\delta}}{2n}}.
		\end{equation}
	\end{thm}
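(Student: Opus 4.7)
The plan is to combine Lemma \ref{thm:lem3} (the Lipschitz/boundedness properties of $L_h$) with Theorem \ref{thm:thm2} (the base Rademacher bound), inserting an additional step that passes from the $0/1$ Hamming risk to its surrogate counterpart. The main conceptual point is to track how the $\sqrt{c}$ factors combine; the calculation itself is short.

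First I would dominate the true risk by the surrogate risk. By the assumption stated just after \eqref{eq:hal_surrogate} that the base loss satisfies $[\![ t(f_j(\mathbf{x}))\neq y_j ]\!] \leq \ell(y_j f_j(\mathbf{x}))$, averaging over $j\in[c]$ gives $L_h^{0/1}(\mathrm{sgn}\circ f(\mathbf{x}),\mathbf{y}) \leq L_h(f(\mathbf{x}),\mathbf{y})$ pointwise. Taking expectation over $(\mathbf{x},\mathbf{y})\sim D$ then yields $R_{0/1}^h(\mathrm{sgn}\circ f) \leq R^h(f)$ for every $f\in\mathcal{F}$.

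Next I would apply Theorem \ref{thm:thm2} to the surrogate $L = L_h$. By Lemma \ref{thm:lem3}, $L_h$ is $\mu$-Lipschitz with $\mu = \rho/\sqrt{c}$ and bounded by $M = B$. Plugging these values into the bound of Theorem \ref{thm:thm2} gives, with probability at least $1-\delta$ over $S$,
\begin{equation*}
R^h(f) \leq \hat{R}^h_S(f) + 2\sqrt{2}\,\frac{\rho}{\sqrt{c}}\sqrt{\frac{c\Lambda^2 r^2}{n}} + 3B\sqrt{\frac{\log(2/\delta)}{2n}}.
\end{equation*}
The crucial observation is that the $1/\sqrt{c}$ from the Lipschitz constant of $L_h$ exactly cancels the $\sqrt{c}$ coming from the vector-contraction factor in Theorem \ref{thm:thm2}, so the complexity term simplifies to $2\sqrt{2}\,\rho\sqrt{\Lambda^2 r^2/n}$, independent of $c$.

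Chaining the two displayed inequalities produces exactly \eqref{eq:al_hal_hal}. There is no real obstacle here; the only thing worth being careful about is the cancellation of the $c$ factor, which is precisely the reason that optimizing the Hamming surrogate enjoys a $c$-free generalization guarantee for Hamming Loss. This is what later makes the contrast with the $O(\sqrt{c})$ and $O(c)$ bounds appearing in Table \ref{tab:main_theoretical_results} meaningful.
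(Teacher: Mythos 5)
Your proposal is correct and follows exactly the paper's own route: apply Theorem \ref{thm:thm2} with the Lipschitz constant $\rho/\sqrt{c}$ and bound $B$ from Lemma \ref{thm:lem3}, then chain with the pointwise domination $R_{0/1}^h(sgn \circ f) \leq R^h(f)$. The cancellation of the $\sqrt{c}$ factors that you highlight is precisely the mechanism behind the $c$-free bound in the paper.
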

\begin{proof}(sketch; full proof in Appendix B.2)
	The key step is to apply Theorem \ref{thm:thm2} and Lemma \ref{thm:lem3}. Besides, the inequality $R_{0/1}^h(sgn \circ f) \leq R^h(f)$ holds.
\end{proof}
Then, we can get the generalization bound for the classical Binary Relevance \cite{boutell2004learning} on Hamming Loss, which we defer it in Appendix B.3. From the above theorem, we can observe that $\mathcal{A}^h$ has a good learning guarantee for Hamming Loss independent of $c$. Besides, \cite{gao2013consistency} has shown it is Bayes consistent for Hamming Loss, which confirms its superiority for Hamming Loss. Moreover, $\mathcal{A}^h$ also has a learning guarantee for Subset Loss by the following theorem.
	\begin{thm}[Optimize Hamming Loss, Subset Loss bound]
	\label{thm:thm4}
		Assume the loss function $L = c L_h$, where $L_h$ is defined in Eq.\eqref{eq:hal_surrogate}. Besides, \emph{Assumption \ref{assump_1}} is satisfied. Then, for any $\delta > 0$, with probability at least $1 - \delta$ over $S$, the following generalization bound in terms of \emph{Subset Loss} holds for all $f \in \mathcal{F}$: 
		\begin{equation}
		\label{eq:al_hal_sub}
			R_{0/1}^s(sgn \circ f) \leq c R_{0/1}^h(sgn \circ f) \leq c \hat{R}^h_S(f) + 2 \sqrt{2} \rho c \sqrt{\frac{\Lambda^2 r^2}{n}} + 3Bc \sqrt{\frac{\log \frac{2}{\delta}}{2n}}.
		\end{equation}
	\end{thm}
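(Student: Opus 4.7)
The plan is to chain together three previously established ingredients: the pointwise inequality between Subset and Hamming losses from Lemma \ref{thm:lem4}, the Lipschitz/boundedness properties of the surrogate Hamming loss from Lemma \ref{thm:lem3}, and the generic Rademacher-based bound of Theorem \ref{thm:thm2}. The only wrinkle is that the statement scales the surrogate loss by the factor $c$, which forces a careful bookkeeping of the Lipschitz constant and the uniform upper bound.

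First I would establish the left-hand inequality $R_{0/1}^s(\mathrm{sgn}\circ f) \leq c\, R_{0/1}^h(\mathrm{sgn}\circ f)$. This follows by taking expectations over $(\mathbf{x},\mathbf{y})\sim D$ in the first half of Lemma \ref{thm:lem4}, which already gives the pointwise bound $L_s^{0/1}(H(\mathbf{x}),\mathbf{y}) \leq c\,L_h^{0/1}(H(\mathbf{x}),\mathbf{y})$. Second, using the other half of Lemma \ref{thm:lem4}, $L_h^{0/1}(H(\mathbf{x}),\mathbf{y}) \leq L_h(f(\mathbf{x}),\mathbf{y})$, I conclude $c\, R_{0/1}^h(\mathrm{sgn}\circ f) \leq c\, R^h(f)$. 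So it remains to bound $c\, R^h(f)$ by the right-hand side of \eqref{eq:al_hal_sub}.

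Next I would apply Theorem \ref{thm:thm2} to the rescaled surrogate loss $L := c L_h$. By Lemma \ref{thm:lem3}, $L_h$ is $\tfrac{\rho}{\sqrt{c}}$-Lipschitz in its first argument and bounded by $B$, so multiplying by $c$ yields that $L = cL_h$ is $\mu$-Lipschitz with $\mu = \sqrt{c}\,\rho$ and bounded by $M = cB$. Plugging $\mu=\sqrt{c}\rho$ and $M=cB$ into the conclusion of Theorem \ref{thm:thm2} gives, with probability at least $1-\delta$ over $S$,
\begin{equation*}
c\, R^h(f) \;\leq\; c\, \hat{R}^h_S(f) \;+\; 2\sqrt{2}\,\sqrt{c}\,\rho\,\sqrt{\frac{c\,\Lambda^2 r^2}{n}} \;+\; 3\,cB\,\sqrt{\frac{\log\frac{2}{\delta}}{2n}},
\end{equation*}
which simplifies to $c\,\hat{R}^h_S(f) + 2\sqrt{2}\,\rho c\,\sqrt{\Lambda^2 r^2/n} + 3Bc\sqrt{\log(2/\delta)/(2n)}$, matching \eqref{eq:al_hal_sub}.

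Finally, I would concatenate the two chains to obtain the full statement $R_{0/1}^s(\mathrm{sgn}\circ f)\leq c R_{0/1}^h(\mathrm{sgn}\circ f)\leq c\hat{R}^h_S(f)+\cdots$. There is essentially no hard step here: the argument is a direct application of previously proved lemmas. The only subtlety worth flagging is that multiplying $L_h$ by $c$ scales the Lipschitz constant from $\rho/\sqrt{c}$ to $\sqrt{c}\,\rho$, which, when combined with the factor $\sqrt{c}$ already present inside the Rademacher term of Theorem \ref{thm:thm2}, is exactly what produces the leading $c$ multiplying the $\sqrt{\Lambda^2 r^2/n}$ term in the stated bound.
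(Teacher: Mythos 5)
Your proposal is correct and follows the paper's own argument essentially verbatim: apply Theorem \ref{thm:thm2} to $L = cL_h$ with Lipschitz constant $\sqrt{c}\,\rho$ and bound $cB$ from Lemma \ref{thm:lem3}, then prepend the chain $R_{0/1}^s(sgn\circ f)\leq cR_{0/1}^h(sgn\circ f)\leq cR^h(f)$ from Lemma \ref{thm:lem4}. The bookkeeping of the $\sqrt{c}\cdot\sqrt{c}=c$ factor in the Rademacher term matches the paper exactly.
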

\begin{proof}(sketch; full proof in Appendix B.4)
	The main idea is to apply the Theorem \ref{thm:thm2}, Lemma \ref{thm:lem3} and \ref{thm:lem4}. Besides, $R_{0/1}^h(sgn \circ f) \leq R^h(f)$.
\end{proof}
	From the above theorem, we can observe that $\mathcal{A}^h$ has a bound on Subset Loss which depends on $O(c)$. When $c$ is small, its performance for Subset Loss would probably enjoy its good leaning guarantee for Hamming Loss.

\subsection{Learning guarantees of Algorithm $\mathcal{A}^s$}
\label{section:alg_sub_hal}
The following theorem provides the learning guarantees of $\mathcal{A}^s$ w.r.t. the measures of Subset and Hamming Loss. The proof is similar to those for Theorem \ref{thm:thm3} \& \ref{thm:thm4} in Section \ref{section:alg_hal_sub}.
	\begin{thm}[Optimize Subset Loss, Subset and Hamming Loss bounds]
	\label{thm:thm5}
		Assume the loss function $L = L_s$, where $L_s$ is defined by \eqref{eq:sub_surrogate}. Besides, \emph{Assumption \ref{assump_1}} is satisfied. Then, for any $\delta > 0$, with probability at least $1 - \delta$ over $S$, the following generalization bounds in terms of \emph{Subset and Hamming Loss} hold for all $f \in \mathcal{F}$: 
		\begin{equation}
		\label{eq:al_sub_hal_sub}
			R_{0/1}^h(sgn \circ f) \leq R_{0/1}^s(sgn \circ f) \leq  \hat{R}^s_S(f) + 2 \sqrt{2} \rho \sqrt{\frac{c \Lambda^2 r^2}{n}} + 3B \sqrt{\frac{\log \frac{2}{\delta}}{2n}}.
		\end{equation}
	\end{thm}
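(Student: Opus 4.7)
The plan is to chain three previously established ingredients: the pointwise relationship in Lemma \ref{thm:lem4}, the Lipschitz/boundedness properties in Lemma \ref{thm:lem3}, and the base generalization bound in Theorem \ref{thm:thm2}.

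First, I would take expectations of the first inequality chain in Lemma \ref{thm:lem4} over $(\mathbf{x},\mathbf{y}) \sim D$. The pointwise bound $L_h^{0/1}(\mathrm{sgn}\circ f(\mathbf{x}),\mathbf{y}) \le L_s^{0/1}(\mathrm{sgn}\circ f(\mathbf{x}),\mathbf{y}) \le L_s(f(\mathbf{x}),\mathbf{y})$ immediately yields
\begin{equation*}
R_{0/1}^h(\mathrm{sgn}\circ f) \;\le\; R_{0/1}^s(\mathrm{sgn}\circ f) \;\le\; R^s(f),
\end{equation*}
which simultaneously establishes the first inequality of the theorem and reduces the remaining task to bounding the true surrogate subset risk $R^s(f)$ by its empirical counterpart $\hat{R}_S^s(f)$.

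Next, I would apply Theorem \ref{thm:thm2} with $L = L_s$. By Lemma \ref{thm:lem3}, the surrogate subset loss $L_s$ is $\rho$-Lipschitz in its first argument and bounded by $B$, so setting $\mu = \rho$ and $M = B$ gives, with probability at least $1-\delta$,
\begin{equation*}
R^s(f) \;\le\; \hat{R}_S^s(f) + 2\sqrt{2}\,\rho\,\sqrt{\tfrac{c\Lambda^2 r^2}{n}} + 3B\sqrt{\tfrac{\log(2/\delta)}{2n}},
\end{equation*}
uniformly over $f \in \mathcal{F}$. Chaining this with the previous display delivers the stated two-sided bound.

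There is no genuinely hard step here, since the heavy lifting has already been packaged into Lemmas \ref{thm:lem3}, \ref{thm:lem4}, and Theorem \ref{thm:thm2}. The one conceptual point worth highlighting — and the reason this bound carries an $O(\sqrt{c/n})$ rate rather than the $c$-independent rate enjoyed by $\mathcal{A}^h$ on Hamming Loss in Theorem \ref{thm:thm3} — is that the Lipschitz constant of $L_s$ is $\rho$ rather than $\rho/\sqrt{c}$: the maximum over $j \in [c]$ in the definition of $L_s$ does not benefit from the $1/c$ averaging that appeared in $L_h$, so the extra $\sqrt{c}$ coming from the kernel-based Rademacher complexity of $\mathcal{F}$ is not cancelled.
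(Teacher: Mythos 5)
Your proposal is correct and follows essentially the same route as the paper's proof in Appendix B.5: take expectations of the first inequality chain in Lemma \ref{thm:lem4} to get $R_{0/1}^h(sgn \circ f) \leq R_{0/1}^s(sgn \circ f) \leq R^s(f)$, then apply Theorem \ref{thm:thm2} with the Lipschitz constant $\rho$ and bound $B$ supplied by Lemma \ref{thm:lem3}. Your closing remark correctly identifies why the $\sqrt{c}$ factor survives here but not in Theorem \ref{thm:thm3}.
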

The full proof is in Appendix B.5. From this theorem, we can observe $\mathcal{A}^s$ has the same bounds for Subset and Hamming Loss both depending on $O(\sqrt{c})$. Intuitively, the learning guarantee of $\mathcal{A}^s$ for Hamming Loss comes from its learning guarantee for Subset Loss.
\subsection{Comparisons}
For the same hypothesis set, $\hat{R}^s_S(f)$ is usually harder to train than $\hat{R}^h_S(f)$, which makes $\hat{R}^h_S(f)$ smaller\footnotemark[4]. For Hamming Loss, comparing the bounds for $\mathcal{A}^h$ (i.e. InEq.\eqref{eq:al_hal_hal} in Theorem \ref{thm:thm3}) and $\mathcal{A}^s$ (i.e. InEq.\eqref{eq:al_sub_hal_sub} in Theorem \ref{thm:thm5}), we can conclude that $\mathcal{A}^h$ has tighter bound than $\mathcal{A}^s$, thus $\mathcal{A}^h$ would perform better than $\mathcal{A}^s$. For Subset Loss, comparing the bounds\footnotemark[5] for $\mathcal{A}^h$ (i.e. InEq.\eqref{eq:al_hal_sub} in Theorem \ref{thm:thm4}) and $\mathcal{A}^s$ (i.e. InEq.\eqref{eq:al_sub_hal_sub} in Theorem \ref{thm:thm5}), we can conclude that, in the large label space case, $\mathcal{A}^s$ would probably perform better than $\mathcal{A}^h$; however, in the small label space case, $\mathcal{A}^h$ can enjoy its good learning guarantee for Hamming Loss while $\mathcal{A}^s$ cannot, thus $\mathcal{A}^h$ would probably have better performance than $\mathcal{A}^s$. Experimental results also support our theoretical analysis.
\footnotetext[4]{Although we cannot formally express this, experimental results support it.}
\footnotetext[5]{Note that, in practice, it probably makes no sense to directly compare the absolute values for the bounds of $\mathcal{A}^h$ and $\mathcal{A}^s$ on Subset Loss because the first items of bounds would probably be close to or bigger than $1$. However, we can still take insights from them to get the dependent variables.}

\section{Learning guarantees between Hamming and Ranking Loss}
\label{sec:learning_hal_rl}

In this section, we first analyze the relationships between Hamming and Ranking Loss. Then, we analyze the learning guarantee of $\mathcal{A}^h$ on the Ranking Loss measure. Last, we analyze the learning guarantee of $\mathcal{A}^r$ on these two measures.

First, we analyze the relationship between Hamming and Ranking Loss, which is shown as follows.
\begin{lem}[The relationship between Hamming and Ranking Loss]
	For the Hamming and Ranking Loss, the following inequality holds: 
	\begin{equation}
		L_r^{0/1} (f(\mathbf{x}), \mathbf{y}) \leq c L_h^{0/1} (sgn \circ f(\mathbf{x}), \mathbf{y}) \leq c L_h (f(\mathbf{x}), \mathbf{y}).
	\end{equation}
	Further, if \emph{Assumption \ref{assump_2}} is satisfied, the following inequality holds:
	\begin{equation}
		L_h^{0/1} (t^* \circ f(\mathbf{x}), \mathbf{y}) \leq c L_r^{0/1} (f(\mathbf{x}), \mathbf{y}) \leq c L_r (f(\mathbf{x}), \mathbf{y}).
	\end{equation}
\end{lem}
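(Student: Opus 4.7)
The plan is to treat each two-step chain separately. In both chains, the outer step (from the $0/1$ loss to its surrogate) is immediate by applying the standing pointwise assumption $[\![\,t(f_j(\mathbf{x}))\neq y_j\,]\!] \leq \ell(y_j f_j(\mathbf{x}))$ inside the defining average or max, so all of the substance lies in the two $0/1$-to-$0/1$ comparisons.

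For the first inequality $L_r^{0/1}(f(\mathbf{x}),\mathbf{y}) \leq c\, L_h^{0/1}(sgn\circ f(\mathbf{x}),\mathbf{y})$, I would argue pairwise. Fix $(p,q)\in Y^+\times Y^-$ with $f_p(\mathbf{x})\leq f_q(\mathbf{x})$; then either $f_p(\mathbf{x})\leq 0$, which puts $p$ in the Hamming error set $E$ (since $y_p=+1$), or else $f_q(\mathbf{x})\geq 0$, which puts $q$ in $E$ (since $y_q=-1$). Counting each misranked pair by an endpoint in $E$, the total is at most $|E\cap Y^+|\,|Y^-| + |E\cap Y^-|\,|Y^+|$. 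Dividing by $|Y^+|\,|Y^-|$ and using $|Y^+|,|Y^-|\geq 1$ gives $L_r^{0/1} \leq |E\cap Y^+| + |E\cap Y^-| = |E| = c\,L_h^{0/1}$.

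For the second inequality $L_h^{0/1}(t^*\circ f(\mathbf{x}),\mathbf{y}) \leq c\, L_r^{0/1}(f(\mathbf{x}),\mathbf{y})$, I would use Assumption 2 to reduce to a canonical split. Set $m=|Y^+|$; the degenerate cases $m\in\{0,c\}$ are trivial. Since $t^*$ is oracle-optimal among ranking-induced thresholds, its Hamming loss is at most that of the split $t_m$ that predicts exactly the top $m$ ranked coordinates positive. A pigeonhole identity at $t_m$ shows that the number $a$ of true positives lying below the cut-off equals the number of true negatives lying at or above it, so $L_h^{0/1}(t_m\circ f) = 2a/c$. Moreover, every such misplaced positive is ranked weakly below every such misplaced negative, producing at least $a^2$ misranked pairs, whence $L_r^{0/1}(f) \geq a^2/(m(c-m))$. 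The desired bound $2a/c \leq c\,a^2/(m(c-m))$ reduces, for $a\geq 1$, to $2m(c-m)\leq c^2 a$, which follows from the AM-GM estimate $m(c-m)\leq c^2/4$; the case $a=0$ is trivial.

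The main obstacle is this second step: getting the factor $c$ exactly requires the \emph{quadratic} lower bound $a^2$ on misrankings coupled with the AM-GM bound on $m(c-m)$, rather than any linear-in-$a$ estimate one might write down first. A minor care is needed for tied scores, but since the Ranking Loss indicator $[\![\,f_p\leq f_q\,]\!]$ is weak, any tie-breaking consistent with the oracle split preserves the $a^2$ count.
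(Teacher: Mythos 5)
Your proof is correct, and for the second inequality it takes a genuinely different route from the paper. For the first chain your pairwise endpoint-counting argument is essentially the paper's argument in disguise: the paper writes the exact identity for the number of misranked pairs under $sgn\circ f$, namely $|Y^-|\,\mathrm{FN}+|Y^+|\,\mathrm{FP}-\mathrm{FN}\cdot\mathrm{FP}$, drops the cross term, and bounds $\frac{\mathrm{FN}}{|Y^+|}+\frac{\mathrm{FP}}{|Y^-|}$ by $\frac{c}{\min\{|Y^+|,|Y^-|\}}L_h^{0/1}\leq cL_h^{0/1}$; your union bound over error endpoints reaches the same place. The real divergence is in the second chain. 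The paper works directly with $t^*$, uses its optimality only to pin down the boundary elements of the optimal split (last predicted-relevant label must be truly relevant, first predicted-irrelevant must be truly irrelevant), and then runs a three-case analysis on whether $\mathrm{FN}$ or $\mathrm{FP}$ vanishes to lower-bound the misranked pairs by $\mathrm{FN}+\mathrm{FP}+\mathrm{FN}\cdot\mathrm{FP}$, obtaining $L_h^{0/1}(t^*\circ f)\leq\max\{|Y^+|,|Y^-|\}\,L_r^{0/1}$. You instead use optimality only to pass to the canonical split $t_m$ at $m=|Y^+|$, where pigeonhole forces $\mathrm{FN}=\mathrm{FP}=a$, and then couple the quadratic lower bound $a^2$ on misrankings with $m(c-m)\leq c^2/4$. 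Your route is shorter, avoids the case analysis and the delicate boundary-element argument entirely, and in fact yields the slightly sharper constant $2m(c-m)/c\leq c/2$ in the worst case $a=1$; the paper's proof buys a bound stated in terms of $\max\{|Y^+|,|Y^-|\}$, which the authors flag (footnote 7) as a quantity they hope to exploit, and its boundary-element observations are reused in the $t^*$-specific reasoning elsewhere in Appendix C. Both arguments handle ties correctly because the Ranking Loss uses the weak inequality $[\![f_p\leq f_q]\!]$, as you note.
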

The full proof is in Appendix C.1\footnotemark[6]. From this lemma, we can observe that Ranking Loss is upper bounded by Hamming Loss times the label size\footnotemark[7]. Similarly, Hamming Loss is upper bounded by Ranking Loss times the label size. Thus, when optimizing one measure with its surrogate, it also optimizes an upper bound for another measure and provides its learning guarantee.
\footnotetext[6]{Note that, the proof is nontrivial, especially for the second inequality.}
\footnotetext[7]{Precisely, it also depends on the ratio of relevant and irrelevant labels (See Appendix C.1). Note that the following analyses are based on the label size and we believe they can be improved by involving the ratio of relevant and irrelevant labels.}

\subsection{Learning guarantee of Algorithm $\mathcal{A}^h$}
The learning algorithm $\mathcal{A}^h$ has a learning guarantee w.r.t. Ranking Loss, as shown by the following theorem.
	\begin{thm}[Optimize Hamming Loss, Ranking Loss bound]
		Assume the loss function $L = c L_h$, where $L_h$ is defined in Eq.\eqref{eq:hal_surrogate}. Besides, \emph{Assumption \ref{assump_1}} is satisfied. Then, for any $\delta > 0$, with probability at least $1 - \delta$ over $S$, the following generalization bound in terms of \emph{Ranking Loss} holds for all $f \in \mathcal{F}$:
		\begin{equation}
			R_{0/1}^r(f) \leq c R_{0/1}^h(sgn \circ f) \leq c \hat{R}^h_S(f) + 2 \sqrt{2} \rho c \sqrt{\frac{\Lambda^2 r^2}{n}} + 3Bc \sqrt{\frac{\log \frac{2}{\delta}}{2n}}.
		\end{equation}
	\end{thm}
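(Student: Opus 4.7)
The plan is to chain together three ingredients that are already in hand: the inequality relating Ranking Loss to Hamming Loss from the lemma just proved, the fact that the $0/1$ Hamming Loss is dominated by its convex surrogate, and the base generalization Theorem \ref{thm:thm2} applied to the loss $L = c L_h$. The structure mirrors the proof of Theorem \ref{thm:thm4} since the right-hand side of the target inequality is identical; only the left-hand side changes because we invoke the Hamming-to-Ranking comparison rather than the Hamming-to-Subset comparison.

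First, I would establish the deterministic chain of inequalities pointwise in $(\mathbf{x}, \mathbf{y})$. From the lemma immediately preceding the theorem, $L_r^{0/1}(f(\mathbf{x}), \mathbf{y}) \leq c\, L_h^{0/1}(\mathrm{sgn} \circ f(\mathbf{x}), \mathbf{y}) \leq c\, L_h(f(\mathbf{x}), \mathbf{y})$, where the second step uses the standing assumption that the base loss $\ell$ upper-bounds the indicator. Taking expectations with respect to $(\mathbf{x}, \mathbf{y}) \sim D$ yields
\begin{equation*}
R_{0/1}^r(f) \;\leq\; c\, R_{0/1}^h(\mathrm{sgn} \circ f) \;\leq\; c\, R^h(f),
\end{equation*}
which produces the first inequality of the claim and reduces the problem to bounding $c\, R^h(f)$ by the empirical surrogate risk plus complexity terms.

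Next, I would apply Theorem \ref{thm:thm2} to the scaled surrogate $L = c L_h$. Lemma \ref{thm:lem3} tells us that $L_h$ is $\frac{\rho}{\sqrt{c}}$-Lipschitz in its first argument and bounded by $B$; scaling by $c$ therefore gives a loss that is $\mu = \sqrt{c}\,\rho$-Lipschitz and $M = cB$-bounded. Plugging these values into Theorem \ref{thm:thm2} yields, with probability at least $1 - \delta$,
\begin{equation*}
c\, R^h(f) \;\leq\; c\, \hat{R}^h_S(f) + 2\sqrt{2}\,\sqrt{c}\,\rho \sqrt{\frac{c \Lambda^2 r^2}{n}} + 3cB\sqrt{\frac{\log\frac{2}{\delta}}{2n}} \;=\; c\,\hat{R}^h_S(f) + 2\sqrt{2}\,\rho c\sqrt{\frac{\Lambda^2 r^2}{n}} + 3Bc\sqrt{\frac{\log\frac{2}{\delta}}{2n}}.
\end{equation*}
Combining this with the deterministic chain above delivers the stated bound.

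There is no serious obstacle here; the only point that requires care is the bookkeeping of the Lipschitz and boundedness constants for the scaled loss $c L_h$. The crucial observation, which also explains why the final $O(\sqrt{c^3/n})$ rate is not worse, is that the extra factor of $c$ in the loss interacts with the $1/\sqrt{c}$ Lipschitz constant of $L_h$ (inherited from the average over coordinates) so that $\mu$ only grows as $\sqrt{c}$; the Rademacher term from Theorem \ref{thm:thm2} then contributes a $\mu \sqrt{c/n} = \rho c/\sqrt{n}$ factor rather than $c^{3/2}/\sqrt{n}$. I would highlight this scaling as the only nontrivial accounting step in an otherwise straightforward adaptation of the argument used for Theorem \ref{thm:thm4}.
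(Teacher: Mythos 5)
Your proof is correct and follows essentially the same route as the paper's: invoke the Hamming--Ranking comparison lemma to get $R_{0/1}^r(f) \leq c\,R_{0/1}^h(sgn\circ f) \leq c\,R^h(f)$, then apply the base generalization theorem to $L = cL_h$ with Lipschitz constant $\rho\sqrt{c}$ and bound $cB$ from Lemma \ref{thm:lem3}. One trivial slip in your commentary: the final rate is $O(\sqrt{c^2/n})$, not $O(\sqrt{c^3/n})$ --- your own displayed computation $\mu\sqrt{c/n} = \rho c/\sqrt{n}$ already shows this, so only the prose needs fixing.
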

The full proof is in Appendix C.2. From this theorem, we can observe that $\mathcal{A}^h$ has a learning guarantee for Ranking Loss depending on $O(c)$. When $c$ is small, $\mathcal{A}^h$ can have promising performance for Ranking Loss.

\subsection{Learning guarantee of Algorithm $\mathcal{A}^r$}
The learning algorithm $\mathcal{A}^r$ has a learning guarantee w.r.t. Ranking Loss as follows.
	\begin{thm}[Optimize Ranking Loss, Ranking Loss bound]
		Assume the loss function $L =  L_r$, where $L_r$ is defined in Eq.\eqref{eq:ral_surrogate}. Besides, \emph{Assumption \ref{assump_1}} is satisfied. Then, for any $\delta > 0$, with probability at least $1 - \delta$ over $S$, the following generalization bound in terms of \emph{Ranking Loss} holds for all $f \in \mathcal{F}$:
		\begin{equation}
			R_{0/1}^r (f) \leq \hat{R}^r_S(f) + 2 \sqrt{2} \rho \sqrt{\frac{c \Lambda^2 r^2}{n}} + 3B \sqrt{\frac{\log \frac{2}{\delta}}{2n}}.
		\end{equation}
	\end{thm}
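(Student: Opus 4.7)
The proof follows exactly the template used for Theorems 3, 4, and 5, so I would not expect any genuine obstacle. The strategy is to combine the Lipschitz/bound properties of $L_r$ (from Lemma \ref{thm:lem3}) with the base generalization bound (Theorem \ref{thm:thm2}), and then use the pointwise inequality $L_r^{0/1} \leq L_r$ to pass from the surrogate risk to the $0/1$ ranking risk.

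Concretely, first I would cite Lemma \ref{thm:lem3} to obtain that the surrogate Ranking Loss $L_r$ defined in Eq.\eqref{eq:ral_surrogate} is $\rho$-Lipschitz in its first argument and is bounded by $B$. Plugging $\mu = \rho$ and $M = B$ into Theorem \ref{thm:thm2} (this is legitimate because Assumption \ref{assump_1}(1)(2) is assumed, and Lemma \ref{thm:lem3} supplied the required hypotheses on $L$), I would immediately get that with probability at least $1-\delta$,
\begin{equation*}
R^r(f) \leq \hat{R}^r_S(f) + 2\sqrt{2}\,\rho \sqrt{\frac{c\Lambda^2 r^2}{n}} + 3B\sqrt{\frac{\log(2/\delta)}{2n}}
\end{equation*}
for every $f \in \mathcal{F}$, where $R^r$ denotes the surrogate expected Ranking risk.

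It then remains to replace $R^r(f)$ on the left-hand side by the $0/1$ ranking risk $R_{0/1}^r(f)$. For this I would use the pointwise relation: since the base loss $\ell$ upper bounds the $0/1$ loss (the standing assumption after Eq.\eqref{eq:hal_surrogate}), for every pair $(p,q) \in Y^+ \times Y^-$ we have $[\![ f_p(\mathbf{x}) \leq f_q(\mathbf{x}) ]\!] \leq \ell(f_p(\mathbf{x}) - f_q(\mathbf{x}))$. Averaging over pairs yields $L_r^{0/1}(f(\mathbf{x}),\mathbf{y}) \leq L_r(f(\mathbf{x}),\mathbf{y})$ pointwise, and taking expectation gives $R_{0/1}^r(f) \leq R^r(f)$, which chained with the previous display delivers the stated bound.

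There is essentially no hard part: the Lipschitz constant of $L_r$ is already packaged in Lemma \ref{thm:lem3}, and the key reason the bound has a $\sqrt{c}$ factor rather than worse is already baked into Theorem \ref{thm:thm2}. If anything, the one place to be careful is verifying that Lemma \ref{thm:lem3} gives the unnormalized Lipschitz constant $\rho$ (not $\rho/\sqrt{c}$ as in the Hamming case and not something scaling with $|Y^+||Y^-|$ coming from the double sum in Eq.\eqref{eq:ral_surrogate}); that is precisely the content of Lemma \ref{thm:lem3}, which is why no explicit factor of $c$ appears in the Lipschitz constant and the resulting bound scales only as $\sqrt{c/n}$.
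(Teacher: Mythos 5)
Your proposal matches the paper's own proof: it invokes Lemma \ref{thm:lem3} for the $\rho$-Lipschitz constant and bound $B$ of $L_r$, applies Theorem \ref{thm:thm2}, and concludes via the pointwise inequality $R_{0/1}^r(f) \leq R^r(f)$. The argument is correct and essentially identical to the one in Appendix C.3.
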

The full proof is in Appendix C.3. Then, we can get the learning guarantee of the classical Rank-SVM \cite{elisseeff2001kernel} (See Appendix C.4). To the best of our knowledge, this is the first to provide its generalization bound on Ranking Loss. From the above theorem, we can observe that $\mathcal{A}^r$ has a learning guarantee for Ranking Loss depending on $O(\sqrt{c})$, which illustrates its superiority for large label space compared with $\mathcal{A}^h$. Besides, similar to the analysis between $\mathcal{A}^h$ and $\mathcal{A}^r$, we can also conclude that $\mathcal{A}^r$ performs better than $\mathcal{A}^s$ w.r.t. Ranking Loss. $\mathcal{A}^r$ also has a learning guarantee w.r.t. Hamming Loss (See Appendix C.5).

%
\section{Experiments}
\label{sec:experiments}

The purpose of this paper is to provide a generalization analysis of learning algorithms for different measures and take insights to explain the aforementioned observations. 
Thus, for the experiments\footnotemark[0], the goal is to validate our theoretical results rather than illustrating the performance superiority of our proposed algorithm. Therefore, we focus on two algorithms, i.e. optimizing Hamming Loss ($\mathcal{A}^h$) and optimizing Subset Loss ($\mathcal{A}^s$), and evaluate them in terms of Subset Accuracy\footnotemark[1] and Hamming Loss on datasets with different label sizes. 

Specifically, six commonly used benchmark datasets from various domains and different label sizes are used: image (image, $5$), emotions (music, $6$), scene (image, $6$), enron (text, $53$), rcv1-subset1 (text, $101$), and bibtex (text, $159$), which are downloaded from the open websites\footnotemark[2]. Besides, for the first three datasets, we normalize the input to mean $=0$ and deviation $=1$. For $\mathcal{A}^h$ and $\mathcal{A}^s$, we take the linear models with the hinge base loss function for simplicity, and utilize SVRG-BB \cite{tan2016barzilai} to efficiently train the models\footnotemark[3]. We conduct $3$-fold cross-validation on each dataset, where the hyper-parameter $\lambda$ is searched in $\{10^{-4},10^{-3},\cdots, 10\}$. 
\footnotetext[0]{The source code is available in https://github.com/GuoqiangWoodrowWu/MLC-theory. }
\footnotetext[1]{For usual practice, we don't utilize Subset Loss although they are equivalent. }
\footnotetext[2]{http://mulan.sourceforge.net/datasets-mlc.html and http://palm.seu.edu.cn/zhangml/}
\footnotetext[3]{Note that the models are both convex optimization problems.}

Table \ref{tab:results_hal} reports the results in terms of Hamming Loss. We can observe $\mathcal{A}^h$ performs better than $\mathcal{A}^s$. This validates our theoretical analysis that $\mathcal{A}^h$ has tighter generalization bound than $\mathcal{A}^s$ on Hamming Loss.
		\begin{table}[!htb]
			\tiny
			\renewcommand{\arraystretch}{1.0}
			\centering
			\caption{The results of various datasets in terms of Hamming Loss (mean $\pm$ std). The smaller the value, the better. Best results are in bold. The numbers in brackets represent the label size.}
			\label{tab:results_hal}
			\begin{tabular}{ccccccc}
				\hline
				Dataset & emotions($6$) & image($5$) & scene($6$) & enron($53$) & rcv1-subset1($101$) & bibtex($159$) \\
				\hline
				$\mathcal{A}^h$  & $\bf 0.202 \pm 0.019$ & $\bf 0.180 \pm 0.002$ & $\bf 0.103 \pm 0.011$  & $\bf 0.047 \pm 0.001$ & $\bf 0.027 \pm 0.000$  & $\bf 0.013 \pm 0.000$ \\
				$\mathcal{A}^s$  & $0.224 \pm 0.015$ & $0.214 \pm 0.013$ & $0.142 \pm 0.009$  & $0.055 \pm 0.001$ & $0.032 \pm 0.000$  & $0.015 \pm 0.000$ \\
				\hline
			\end{tabular}
		\end{table}
		
Besides, Table \ref{tab:results_sa} reports the results in terms of Subset Accuracy. We can observe that for small label space datasets, $\mathcal{A}^h$ performs better than $\mathcal{A}^s$. In contrast, for relatively large label space datasets, $\mathcal{A}^s$ performs better than $\mathcal{A}^h$. This also validates our theoretical analysis results. 
		\begin{table}[!htb]
			\tiny
			\renewcommand{\arraystretch}{1.0}
			\centering
			\caption{The results of various datasets in terms of Subset Accuracy (mean $\pm$ std). The larger the value, the better. Best results are in bold.}
			\label{tab:results_sa}
			\begin{tabular}{ccccccc}
				\hline
				Dataset & emotions($6$) & image($5$) & scene($6$) & enron($53$) & rcv1-subset1($101$) & bibtex($159$) \\
				\hline
				$\mathcal{A}^h$  & $\bf 0.288 \pm 0.026$ & $\bf 0.471 \pm 0.004$ & $\bf 0.628 \pm 0.035$  & $\bf 0.143 \pm 0.005$ & $0.079 \pm 0.008$ &  $0.190 \pm 0.001$ \\
				$\mathcal{A}^s$  & $0.240 \pm 0.021$ & $0.396 \pm 0.031$ & $0.515 \pm 0.032$  & $0.133 \pm 0.013$ & $\bf 0.111 \pm 0.004$  & $\bf 0.198 \pm 0.001$ \\
				\hline
			\end{tabular}
		\end{table}

\section{Discussions}
\label{sec:discussions}

There are two competing approach frameworks \cite{ye2012optimizing, waegeman2014bayes} w.r.t. a loss $L^{0/1}$ for MLC: 1) the decision-theoretic approach (DTA) fits a probabilistic model to estimate $P(\mathbf{y}|\mathbf{x})$ during training, followed by an inference phase for every test instance via the optimal strategy w.r.t. $L^{0/1}$; 2) the empirical utility maximization (EUM) approach optimizes $L^{0/1}$ with its surrogate loss to find a classifier in a constrained parametric hypothesis space during training. The analysis in \cite{dembczynski2012label} is mainly under the DTA framework, while ours is under the EUM framework and complementary to \cite{dembczynski2012label}. Below, we discuss the pros and cons of each one in detail.

\textbf{Pros and cons of the analysis in \cite{dembczynski2012label}:} \cite{dembczynski2012label} can provide much insight for the DTA framework although there is still a gap between the actual $P(\mathbf{y}|\mathbf{x})$ and its estimated one through many parametric methods (e.g., probabilistic classifier chains \cite{cheng2010bayes}). In contrast, it may offer little insight for the EUM framework (e.g., Binary Relevance which directly optimizes HL with its surrogate loss). Specifically, \cite{dembczynski2012label} assumes that the hypothesis space is unconstrained to allow $P(\mathbf{y}|\mathbf{x})$ known, and gets the Bayes-optimal classifiers w.r.t. HL (i.e. $\mathbf{h}_{H}^*$) and SA (i.e. $\mathbf{h}_{s}^*$) by their corresponding optimal strategy. Then, it analyzes the regret (a.k.a excess risk) upper bounds of $\mathbf{h}_{H}^*$ and $\mathbf{h}_{s}^*$ in terms of SA (i.e., Proposition 4) and HL (i.e., Proposition 5) respectively, and finds the bounds are large, which concludes that HL and SA conflict with each other.

\textbf{Pros and cons of our analysis:} Our analysis can provide much insight for the EUM framework, while it may offer little insight for the DTA framework. Specifically, we directly analyze the generalization bounds for the learning algorithms w.r.t. many measures. Although here we consider the kernel-based hypothesis class, which includes the linear and non-linear model by specifying different kernel functions, our analysis can be extended to other forms of hypothesis classes. Meanwhile, our analysis misses the aspect of consistency which is a central point of \cite{dembczynski2012label}. Besides, our analysis is for a specific model that may be constrained for optimizing the SA measure.

There are many methods that aim to optimize the SA measure. Since the Bayes decision for SA is based on the joint mode of the conditional distribution instead of the marginal modes as for HL \cite{dembczynski2012label}, the methods optimizing SA need to model conditional label dependencies to estimate (at least implicitly) the joint distribution of labels. One typical method is the structured SVM \cite{tsochantaridis2005large, petterson2011submodular}\footnotemark[1], which enables incorporating label dependencies to the joint feature space defined on $\mathbf{y}$ and $\mathbf{x}$. The struct hinge loss for each sample $( \mathbf{x}_i, \mathbf{y}_i )$ is $\max_{\mathbf{y} \in \{ -1, +1 \}^c} \{0, L_s^{0/1}(\mathbf{y}, \mathbf{y}_i) + \langle \Psi(\mathbf{x}, \mathbf{y}), \theta \rangle - \langle \Psi(\mathbf{x}, \mathbf{y}_i), \theta \rangle \}$, which is defined over all $2^c$ label vectors and a convex upper bound for $L_s^{0/1}(\mathbf{y}, \mathbf{y}_i)$. In comparison, our proposed (convex) surrogate hinge loss $L_s$ for SA is $\max_{j \in [c]} \ \{ \max \{ 0, 1 - y_{ij} \langle \phi(\mathbf{x}_i), \mathbf{w}^j \rangle \} \}$, which is defined over the label size $c$ and thus has better computational efficiency than the struct hinge loss. Although $L_s$ also incorporates label dependencies, it is interesting to test whether $L_s$ is sufficient to find the joint mode of the distribution. We will study its performance by comparing with other state-of-the-art methods for SA including the structured SVM in the future.

\footnotetext[1]{Note that, although F-score is optimized in \cite{petterson2011submodular}, we can easily adapt it to optimize SA by replacing the $\triangle (y, y^n)$ with subset zero-one loss.} 

Besides, Label Powerset (LP) is another representative method for optimizing SA. It transforms MLC into a multi-label classification problem where each subset can be viewed as a new class and eventually constructs an exponential number of classes (i.e. $2^c$) in total. Based on the theoretical results \cite{maurer2016vector, lei2019data} for multi-class classification, LP has a generalization error bound w.r.t. SA which depends on $O(\sqrt{2^c})$.\footnotemark[2] This explains that it would perform poorly when the label size is large, and inspires Random k-Labelsets (RAKEL) \cite{tsoumakas2010random} to boost the performance by combining ensemble techniques and LP with a small label space. Finally, some work \cite{read2019classifier} claims that algorithms designed for SA perform well for HL, which needs more exploration to explain it.

\footnotetext[2]{Note that, this bound is provided for fair comparison by using the same techniques \cite{maurer2016vector} in this paper and it can be improved to be dependent on $O(c^\frac{3}{2})$ by the techniques in \cite{lei2019data}.}

\section{Conclusions}
\label{sec:conclusion}
This paper attempts to theoretically analyze the effects of learning algorithms on the measures of Hamming, Subset, and Ranking Loss by providing the generalization bounds for algorithms on these measures. Through the analysis, we find that the label size has an important effect on the learning guarantees of an algorithm for different measures. Besides, we take insights from the theoretical results to explain the phenomenon that in small label space case, optimizing Hamming Loss with its surrogate loss can perform well for Subset Loss. Experimental results also support our theory findings. In the future, our analysis techniques can be extended for the generalization analysis of other measures. Besides, how to make these bounds tighter will inspire more effective learning algorithms.

\section*{Broader Impact}



As a theoretical research, this work will potentially provide insights for developing better algorithms for multi-label classification, while without explicit negative consequences to our society. 

\begin{ack}
We thank Chongxuan Li for valuable discussions. We also thank all four reviewers for their insightful comments and meta-reviewer for the extensive invaluable comments to improve the paper quality. This work was supported by the National Key Research and Development Program of China (No.2017YFA0700904), NSFC Projects (Nos.  61620106010, U19B2034, U1811461), Beijing Academy of Artificial Intelligence (BAAI), Tsinghua-Huawei Joint Research Program, a grant from Tsinghua Institute for Guo-Qiang, Tiangong Institute for Intelligent Computing, and the NVIDIA NVAIL Program with GPU/DGX Acceleration.
\end{ack}

\bibliographystyle{plainnat}
\bibliography{references.bib}

\clearpage

\appendix

\section{Generalization analysis techniques}
In this section, we review the background on Rademacher complexity \cite{bartlett2002rademacher,mohri2018foundations} and the contraction inequality \cite{maurer2016vector}. Then, we present the bound for the Rademacher complexity of the kernel-based hypothesis set. Last, we provide the detailed proofs for Lemma \ref{lem:lem1} and Theorem \ref{thm:thm1}.
\subsection{Background on Rademacher complexity and the contraction inequality}
\begin{definition_appendix}{1}[The loss function space]
\label{def:def1}
	For the loss function $L: \mathbb{R}^c \times \{-1, +1\}^c \rightarrow \mathbb{R}_+$, the loss function space associated with $\mathcal{F}$ is a family of functions mapping from $( \mathbf{x}, \mathbf{y} )$ to $\mathbb{R}_+$, which is as follows:
 	\begin{equation*}
 		\mathcal{G} = \{ g: (\mathbf{x}, \mathbf{y}) \mapsto L(f(\mathbf{x}), \mathbf{y}): f \in \mathcal{F} \}.
 	\end{equation*}
\end{definition_appendix}

\begin{definition_appendix}{2}[The Rademacher complexity of the loss space]
\label{def:def2}
	The empirical Rademacher complexity of the loss function space is defined as follows:
	\begin{equation*}
		\hat{\mathfrak{R}}_S(\mathcal{G}) = \mathop{\mathbb{E}}_{\boldsymbol{\epsilon}} \bigg [ \sup_{g \in \mathcal{G}} \frac{1}{n} \sum_{i=1}^n \epsilon_i g(\mathbf{z}_i) \bigg ] ,
	\end{equation*}
	where $\mathbf{z}_i = (\mathbf{x}_i, \mathbf{y}_i)$, and $\boldsymbol{\epsilon} = [\epsilon_1, ..., \epsilon_n]$ in which $\epsilon_i$ is i.i.d. sampled from the Rademacher distribution $Unif(\{-1, +1\})$. Besides, its deterministic counterpart is $\mathfrak{R}_n(\mathcal{G}) = \mathop{\mathbb{E}}_{S \sim D^n} [\hat{\mathfrak{R}}_S(\mathcal{G})]$.
\end{definition_appendix}

\begin{definition_appendix}{3}[The Rademacher complexity of the hypothesis space]
\label{def:def3}
	The empirical Rademacher complexity of the hypothesis space is defined as follows:
	\begin{equation*}
		\hat{\mathfrak{R}}_S(\mathcal{F}) = \mathop{\mathbb{E}}_{\boldsymbol{\epsilon}} \bigg [ \sup_{f \in \mathcal{F}} \frac{1}{n} \sum_{i=1}^n \sum_{j=1}^c \epsilon_{ij} f_j(\mathbf{x}_i) \bigg ] ,
	\end{equation*}
	where $\boldsymbol{\epsilon} = [(\epsilon_{ij})] \in \{-1, +1\}^{n \times c}$ in which each element $\epsilon_{ij}$ is i.i.d. sampled from the Rademacher distribution $Unif(\{+1, -1\})$ and $f(\mathbf{x}_i) = [f_1({\mathbf{x}_i}),...,f_c(\mathbf{x}_i)]$. Besides, its deterministic counterpart is $\mathfrak{R}_n(\mathcal{F}) = \mathop{\mathbb{E}}_{S \sim D^n} [\hat{\mathfrak{R}}_S(\mathcal{F})]$.
\end{definition_appendix}

\begin{thm_appendix} {A.1} [\cite{mohri2018foundations}]
\label{thm:thm_a1}
	Assume $\mathcal{G}$ be a family of functions from $\mathbf{z}$ to $[0, M]$. Then, for any $\delta > 0$, with probability at least $1 - \delta$ over the draw of an i.i.d. sample S of size $n$, the following generalization bound holds for all $g \in \mathcal{G}$:
	\begin{equation}
		\mathop{\mathbb{E}} [g(z)] \leq \frac{1}{n} \sum_{i=1}^{n} g(\mathbf{z}_i) + 2 \hat{\mathfrak{R}}_S(\mathcal{G}) + 3M \sqrt{\frac{\log \frac{2}{\delta}}{2n}}.
	\end{equation}
\end{thm_appendix}

\begin{lem_appendix}{A.1}[The contraction lemma \cite{maurer2016vector}]
\label{lem:lem_a1}
Assume the loss function $L$ is $\mu$-lipschitz w.r.t. the first argument, i.e. $\forall f^1, f^2 \in \mathcal{F},  | L(f^1 (\mathbf{x}), \mathbf{y}) - L(f^2 (\mathbf{x}), \mathbf{y}) | \leq \mu \| f^1 (\mathbf{x}) - f^2 (\mathbf{x}) \| $ always holds. Then, the following inequality holds:
\begin{equation}
	\hat{\mathfrak{R}}_S(\mathcal{G}) \leq \sqrt{2} \mu \hat{\mathfrak{R}}_S(\mathcal{F}).
\end{equation}
\end{lem_appendix}

\begin{thm_appendix}{A.2} 
\label{thm:thm_a2}
	Assume the loss function $L: \mathbb{R}^c \times \{-1, +1\}^c \rightarrow \mathbb{R}_+$ is $\mu$-Lipschitz continuous w.r.t. the first argument and bounded by $M$. Then, for any $\delta > 0$, with probability at least $1 - \delta$ over the draw of an i.i.d. sample $S$ of size $n$, the following generalization bound holds for all $f \in \mathcal{F}$:
	\begin{equation}
		R(f) \leq \hat{R}_S(f) + 2 \sqrt{2} \mu \hat{\mathfrak{R}}_S(\mathcal{F}) + 3M \sqrt{\frac{\log \frac{2}{\delta}}{2n}}.
	\end{equation}
\end{thm_appendix}
\begin{proof}
	It is straightforward to get this theorem by applying Theorem \ref{thm:thm_a1} and Lemma \ref{lem:lem_a1}. 
\end{proof}

\begin{lem_appendix} {A.2}[The Rademacher complexity of the kernel-based hypothesis set]
\label{lem:lem_a2}
	Assume that there exists $r > 0$ such that $\kappa(\mathbf{x}, \mathbf{x}) \leq r^2$ for all $\mathbf{x} \in \mathcal{X}$. Then, for the kernel-based hypothesis set $\mathcal{F} = \{ \mathbf{x} \longmapsto \mathbf{W} ^\top \phi(\mathbf{x}): \mathbf{W}= (\mathbf{w}_1, \ldots ,\mathbf{w}_c)^\top, \| \mathbf{W} \| \leq \Lambda \}$, $\hat{\mathfrak{R}}_S(\mathcal{F})$ can be bounded bellow:
	\begin{equation}
		\hat{\mathfrak{R}}_S(\mathcal{F}) \leq \sqrt{\frac{c \Lambda^2 r^2}{n}}.
	\end{equation}
\end{lem_appendix}
\begin{proof}
	For the kernel-based hypothesis set $\mathcal{F} = \{ \mathbf{x} \longmapsto \mathbf{W} ^\top \phi(\mathbf{x}): \mathbf{W}= (\mathbf{w}_1, \ldots ,\mathbf{w}_c)^\top, \| \mathbf{W} \| \leq \Lambda \}$, the following inequalities about $\hat{\mathfrak{R}}_S(\mathcal{F})$ hold:
	\begin{equation}
		\begin{split}
			& \hat{\mathfrak{R}}_S(\mathcal{F})  = \frac{1}{n} \mathbb{E}_{\boldsymbol{\epsilon}} \bigg [ \sup_{\| \mathbf{W} \| \leq \Lambda} \sum_{i=1}^n \sum_{j=1}^c \epsilon_{ij} \langle \mathbf{w}_j, \phi(\mathbf{x}_i) \rangle \bigg ] \\
			& \qquad \quad = \frac{1}{n} \mathbb{E}_{\boldsymbol{\epsilon}} \bigg [ \sup_{\| \mathbf{W} \| \leq \Lambda} \sum_{j=1}^c \langle \mathbf{w}_j, \sum_{i=1}^n \epsilon_{ij} \phi(\mathbf{x}_i) \rangle \bigg ] \\
			& \qquad \quad = \frac{1}{n} \mathbb{E}_{\boldsymbol{\epsilon}} \bigg [ \sup_{\| \mathbf{W} \| \leq \Lambda} \langle \mathbf{W}, \mathbf{X}_{\epsilon} \rangle \bigg ] \qquad \qquad \quad (\mathbf{X}_{\epsilon} = [\sum_{i=1}^n \epsilon_{i1} \phi(\mathbf{x}_i),\ldots,\sum_{i=1}^n \epsilon_{ic} \phi(\mathbf{x}_i)]) \\
			& \qquad \quad \leq \frac{1}{n} \mathbb{E}_{\boldsymbol{\epsilon}} \bigg [ \sup_{\| \mathbf{W} \| \leq \Lambda} \| \mathbf{W} \| \  \| \mathbf{X}_{\epsilon} \| \bigg ] \qquad \qquad \quad (\emph{Cauchy-Schwarz Inequality}) \\
			& \qquad \quad = \frac{\Lambda}{n} \mathbb{E}_{\boldsymbol{\epsilon}} \bigg [ \sum_{j=1}^c \| \sum_{i=1}^n \epsilon_{ij} \phi(\mathbf{x}_i) \|^2 \bigg ]^{1/2} \\
			& \qquad \quad = \frac{\Lambda}{n} \mathbb{E}_{\boldsymbol{\epsilon}} \bigg [ \sum_{j=1}^c \sum_{p=1}^n \sum_{q=1}^n \epsilon_{pj} \epsilon_{qj} \langle \phi(\mathbf{x}_p), \phi(\mathbf{x}_q) \rangle \bigg ]^{1/2} \\
			& \qquad \quad = \frac{\Lambda}{n} \mathbb{E}_{\boldsymbol{\epsilon}} \bigg [ \sum_{j=1}^c \sum_{i=1}^n \langle \phi(\mathbf{x}_i), \phi(\mathbf{x}_i) \rangle \bigg ]^{1/2} \qquad (\forall p \neq q, \mathbb{E}[\epsilon_{pj} \epsilon_{qj}] = \mathbb{E}[\epsilon_{pj}] \mathbb{E}[\epsilon_{qj}] = 0 \ and \ \mathbb{E}[\epsilon_{ij} \epsilon_{ij}] = 1 ) \\	
			& \qquad \quad = \frac{\Lambda \sqrt{c \ \rm Tr (\mathbf{K}) }}{n} \qquad \qquad (\kappa(\mathbf{x}_i, \mathbf{x}_i)=\langle \phi(\mathbf{x}_i), \phi(\mathbf{x}_i) \rangle, \mathbf{K} = [\kappa(\mathbf{x}_i, \mathbf{x}_j)] \emph{ is the kernel matrix} ) \\
			& \qquad \quad \leq \frac{\sqrt{c \Lambda^2 r^2}}{n} .
		\end{split}
	\end{equation}
\end{proof}

\subsection{The property of the surrogate loss function}
\begin{lem_appendix}{1}[The property of the surrogate loss function]
\label{lem:lem1}
	Assume that the base loss function $\ell (u)$ is $\rho$-Lipschitz continuous and bounded by $B$. Then, the surrogate Hamming Loss Eq.(2) is $\frac{\rho}{\sqrt{c}}$-Lipschitz, the surrogate Ranking Loss Eq.(6) is $\rho$-Lipschitz, and the surrogate Subset Loss Eq.(4) is $\rho$-Lipschitz. Besides, they are all bounded by $B$.
\end{lem_appendix}
\begin{proof}
	For notation clarity, we denote $f(\mathbf{x})$ by $f$ in the following. For the surrogate Hamming Loss Eq.(2), $\forall f^1, f^2 \in \mathcal{F}$, the following holds: 
	\begin{equation}
		\begin{split}
			& \quad | L_h(f^1, \mathbf{y}) - L_h(f^2, \mathbf{y}) | \\
			& = \frac{1}{c} \sum_{j=1}^c | \ell (f^1_j, \mathbf{y}_j) - \ell (f^2_j, \mathbf{y}_j) | \\
			& = \frac{1}{c} \sum_{j=1}^c | \ell (\mathbf{y}_j f^1_j) - \ell (\mathbf{y}_j f^2_j) | \\
			& \leq \frac{1}{c} \sum_{j=1}^c \rho | \mathbf{y}_j f^1_j - \mathbf{y}_j f^2_j | \qquad \qquad \quad (\ell(u) \ is \ \rho-Lipschitz) \\
			& \leq \rho \bigg [ \frac{1}{c} \sum_{j=1}^c | f^1_j - f^2_j |^2 \bigg ]^{1/2} \qquad \qquad \quad (Jense's \ In equality) \\
			& = \frac{\rho}{\sqrt{c}} \| f^1 - f^2 \|.
		\end{split}		
	\end{equation}
	For the surrogate Ranking Loss Eq.(6), $\forall f^1, f^2 \in \mathcal{F}$, the following holds:
	\begin{equation}
		\begin{split}
			& \quad | L_r(f^1, \mathbf{y}) - L_r(f^2, \mathbf{y}) | \\
			& = \frac{1}{|Y^+| |Y^-|} \sum_{p \in Y^+} \sum_{q \in Y^-} | \ell(f^1_p - f^1_q) - \ell(f^2_p - f^2_q) | \\
			& \leq \frac{1}{|Y^+| |Y^-|} \sum_{p \in Y^+} \sum_{q \in Y^-} \rho | f^1_p - f^1_q - f^2_p + f^2_q | \qquad \qquad \quad (\ell(u) \ is \ \rho-Lipschitz) \\
			& \leq \rho \bigg [ \frac{1}{|Y^+| |Y^-|} \sum_{p \in Y^+} \sum_{q \in Y^-} | f^1_p - f^1_q - f^2_p + f^2_q |^2 \bigg ]^{1/2} \qquad \quad (Jense's \ In equality) \\
			& \leq \rho \bigg [ \frac{1}{|Y^+| |Y^-|} \sum_{p \in Y^+} \sum_{q \in Y^-} \bigg \{ | f^1_p - f^2_p |^2 + | f^1_q - f^2_q |^2 \bigg \} \bigg ]^{1/2} \qquad \quad (|a - b|^2 \leq a^2 + b^2) \\
			& = \rho \bigg [ \frac{1}{|Y^+| |Y^-|} \bigg \{ |Y^-| \sum_{p \in Y^+} | f^1_p - f^2_p |^2 + |Y^+|\sum_{q \in Y^-} | f^1_q - f^2_q |^2 \bigg \} \bigg ]^{1/2} \\
			& \leq \rho \bigg [ \frac{\max\{ |Y^+|, |Y^-| \}}{|Y^+| |Y^-|} \sum_{j=1}^c | f^1_j - f^2_j |^2 \bigg ]^{1/2} \\
			& = \frac{\rho}{\min\{ |Y^+|, |Y^-| \}} \| f^1 - f^2 \| \\
			& \leq \rho \| f^1 - f^2 \| \qquad \qquad \qquad \qquad \qquad \quad (1 \leq \min\{|Y^+|, |Y^-|\} \leq \frac{c}{2}).
		\end{split}
	\end{equation}	
	For the surrogate Subset Loss Eq.(4), $\forall f^1, f^2 \in \mathcal{F}$, the following holds:
	\begin{equation}
		\begin{split}
			& \quad | L_s(f^1, \mathbf{y}) - L_s(f^2, \mathbf{y}) | \\
			& = | \max_{j \in [c]} \{ \ell(\mathbf{y}_j f^1_j) \} - \max_{j \in [c]} \{ \ell(\mathbf{y}_j f^2_j) \}| \\
			& = | \ell(\mathbf{y}_q f^1_q) - \ell(\mathbf{y}_p f^2_p) | \qquad \qquad (w.l.o.g. \ assume \ q = \argmax_{j \in [c]} \{ \ell(\mathbf{y}_j f^1_j) \}, \ p = \argmax_{j \in [c]} \{ \ell(\mathbf{y}_j f^2_j) \}) \\
			& \leq | \ell(\mathbf{y}_q f^1_q) - \ell(\mathbf{y}_q f^2_q) |  \qquad \qquad \quad (w.l.o.g. \ assume \ \ell(\mathbf{y}_q f^1_q) \geq \ell(\mathbf{y}_p f^2_p). \ \ell(\mathbf{y}_p f^2_p) \geq \ell(\mathbf{y}_q f^2_q). ) \\
			& \leq \rho | \mathbf{y}_q f^1_q - \mathbf{y}_q f^2_q | \qquad \qquad \quad (\ell(u) \ is \ \rho-Lipschitz) \\
			& = \rho | f^1_q -  f^2_q | \\
			& \leq \rho \| f^1 -  f^2 \|_{\max} \\
			& \leq \rho \| f^1 -  f^2 \| \qquad \qquad \quad (\| a \|_{max} \leq \| a \|_2) .
		\end{split}
	\end{equation}
	Furthermore, since the base loss function $\ell (u)$ is bounded by $B$, it's easy to verify these three surrogate loss functions are all bounded by $B$.
\end{proof}

\subsection{The base theorem for generalization analysis}

Here, we give the base theorem used in the subsequent analysis, as follows.
\begin{thm_appendix}{1}[The base theorem for generalization analysis]
\label{thm:thm1}
	Assume the loss function $L: \mathbb{R}^c \times \{-1, +1\}^c \rightarrow \mathbb{R}_+$ is $\mu$-Lipschitz continuous w.r.t. the first argument and bounded by $M$. Besides, (1) and (2) in \emph{Assumption 1} are satisfied. Then, for any $\delta > 0$, with probability at least $1 - \delta$ over the draw of an i.i.d. sample $S$ of size $n$, the following generalization bound holds for all $f \in \mathcal{F}$:
	\begin{equation}
		R(f) \leq \hat{R}_S(f) + 2 \sqrt{2} \mu \sqrt{\frac{c \Lambda^2 r^2}{n}} + 3M \sqrt{\frac{\log \frac{2}{\delta}}{2n}}.
	\end{equation}
\end{thm_appendix}
\begin{proof}
	It is straightforward to get this theorem by applying Theorem \ref{thm:thm_a2} and Lemma \ref{lem:lem_a2}.
\end{proof}

\section{Learning guarantees between Hamming and Subset Loss}

\subsection{The relationship between Hamming and Subset Loss}
\begin{lem_appendix}{2}[The relationship between Hamming and Subset Loss]
\label{lem:lem2}
	For the classifier\footnotemark[1] $H(\mathbf{x}) = sgn \circ f (\mathbf{x})$, the following inequalities hold:
	\begin{equation}
	\label{eq:relation_hal_sub_1}
	    L_h^{0/1} (H (\mathbf{x}), \mathbf{y}) \leq L_s^{0/1} (H (\mathbf{x}), \mathbf{y}) \leq L_s(f(\mathbf{x}), \mathbf{y}),
	\end{equation}
	\begin{equation}
	\label{eq:relation_hal_sub_2}
	    L_s^{0/1} (H (\mathbf{x}), \mathbf{y}) \leq c L_h^{0/1} (H (\mathbf{x}), \mathbf{y}) \leq c L_h(f(\mathbf{x}), \mathbf{y}).
	\end{equation}
\end{lem_appendix}
\footnotetext[1]{Note that, for other classifiers, e.g. $H(\mathbf{x}) = t \circ f (\mathbf{x})$, the first inequalities (i.e. $L_h^{0/1} (H (\mathbf{x}), \mathbf{y}) \leq L_s^{0/1} (H (\mathbf{x}), \mathbf{y})$ and $L_s^{0/1} (H (\mathbf{x}), \mathbf{y}) \leq c L_h^{0/1} (H (\mathbf{x}), \mathbf{y})$) in the following also hold.}
\begin{proof}
	For simplicity, we set $a_j =  [\![ sgn (f_j(\mathbf{x}_i)) \neq \mathbf{y}_{j} ]\!]  \in \{0, 1\}, \ j \in [n]$. Then,
	\begin{equation}
		\begin{split}
			& L_h^{0/1}(H(\mathbf{x}), \mathbf{y}) = \frac{1}{c} \{a_1+\ldots +a_c \} = \emph{mean} \{a_1, \ldots, a_c \} \\
			& L_s^{0/1}(H(\mathbf{x}), \mathbf{y}) = \max \{a_1, \ldots, a_c \} \\
		\end{split}
	\end{equation}
	Thus, it can be easily verified that 
	\begin{equation}
		 L_h^{0/1}(H(\mathbf{x}), \mathbf{y}) \leq L_s^{0/1}(H(\mathbf{x}), \mathbf{y}), \  L_s^{0/1}(H(\mathbf{x}), \mathbf{y}) \leq c L_h^{0/1}(H(\mathbf{x}), \mathbf{y}).
	\end{equation}
	Besides, the following inequalities hold:
	\begin{equation}
		L_s^{0/1}(H(\mathbf{x}), \mathbf{y}) \leq L_s(f(\mathbf{x}), \mathbf{y}), \ L_h^{0/1}(H(\mathbf{x}), \mathbf{y}) \leq L_h(f(\mathbf{x}), \mathbf{y}).
	\end{equation}
\end{proof}

\subsection{Learning guarantee of Algorithm $\mathcal{A}^h$ for Hamming Loss}
\begin{thm_appendix}{2}[Optimize Hamming Loss, Hamming Loss bound]
\label{thm:thm2}
	Assume the loss function $L = L_h$, where $L_h$ is defined in Eq.(2). Besides, \emph{Assumption 1} is satisfied. Then, for any $\delta > 0$, with probability at least $1 - \delta$ over $S$, the following generalization bound in terms of \emph{Hamming Loss} holds for all $f \in \mathcal{F}$:
	\begin{equation}
	\label{eq:al_hal_hal}
		R_{0/1}^h(sgn \circ f) \leq \hat{R}^h_S(f) + 2 \sqrt{2} \rho \sqrt{\frac{\Lambda^2 r^2}{n}} + 3B \sqrt{\frac{\log \frac{2}{\delta}}{2n}}.
	\end{equation}
\end{thm_appendix}
\begin{proof}
	Since $L = L_h$, we can get its Lipschitz constant (i.e. $\frac{\rho}{\sqrt{c}}$) and bounded value (i.e. $B$) from Lemma \ref{lem:lem1}. Then, applying Theorem \ref{thm:thm1} and the inequality $R_{0/1}^h(sgn \circ f) \leq R^h(f)$, we can get this theorem.
\end{proof}

\subsection{Generalization bound for the classical Binary Relevance}
\begin{corollary_appendix}{1}[Binary Relevance \cite{boutell2004learning}, Hamming Loss bound]
\label{cor:cor1}
	Assume the loss function $L = L_h$, where $L_h$ is defined in Eq.(2) and the base loss function is the hinge loss $\ell(u) = \max(0, 1 - u)$. Besides, \emph{Assumption 1} is satisfied. Then, for any $\delta > 0$, with probability at least $1 - \delta$ over $S$, the following generalization bound in terms of \emph{Hamming Loss} holds for all $f \in \mathcal{F}$.
	\begin{equation}
		R_{0/1}^h(sgn \circ f) \leq \hat{R}^h_S(f) + 2 \sqrt{2} \sqrt{\frac{\Lambda^2 r^2}{n}} + 3B \sqrt{\frac{\log \frac{2}{\delta}}{2n}}
	\end{equation}
\end{corollary_appendix}
\begin{proof}
	Since the hinge loss $\ell(u) = \max(0, 1 - u)$ is $1$-Lipschitz, we can straightforwardly get this corollary by applying Theorem \ref{thm:thm2}.
\end{proof}

\subsection{Learning guarantee of Algorithm $\mathcal{A}^h$ for Subset Loss}
	\begin{thm_appendix}{3}[Optimize Hamming Loss, Subset Loss bound]
	\label{thm:thm3}
		Assume the loss function $L = c L_h$, where $L_h$ is defined in Eq.(2). Besides, \emph{Assumption 1} is satisfied. Then, for any $\delta > 0$, with probability at least $1 - \delta$ over $S$, the following generalization bound in terms of \emph{Subset Loss} holds for all $f \in \mathcal{F}$: 
		\begin{equation}
		\label{eq:al_hal_sub}
			R_{0/1}^s(sgn \circ f) \leq c R_{0/1}^h(sgn \circ f) \leq c \hat{R}^h_S(f) + 2 \sqrt{2} \rho c \sqrt{\frac{\Lambda^2 r^2}{n}} + 3Bc \sqrt{\frac{\log \frac{2}{\delta}}{2n}}.
		\end{equation}
	\end{thm_appendix}
\begin{proof}
	Since $L = c L_h$, we can get its Lipschitz constant (i.e. $\rho \sqrt{c}$) and bounded value (i.e. $B c$) from Lemma \ref{lem:lem1}. Then, applying Theorem \ref{thm:thm1}, we can get that,  for any $\delta > 0$, with probability at least $1 - \delta$ over $S$, the following generalization bound  holds for all $f \in \mathcal{F}$: 
		\begin{equation}
			c R^h(f) \leq c \hat{R}^h_S(f) + 2 \sqrt{2} \rho c \sqrt{\frac{\Lambda^2 r^2}{n}} + 3Bc \sqrt{\frac{\log \frac{2}{\delta}}{2n}}.
		\end{equation}
	Besides, from Lemma \ref{lem:lem2} (i.e. InEq.\eqref{eq:relation_hal_sub_2}), we can get the inequality $R_{0/1}^s(sgn \circ f) \leq c R_{0/1}^h(sgn \circ f) \leq c R^h(f)$. Thus, we can get this theorem.
\end{proof}

\subsection{Learning guarantees of Algorithm $\mathcal{A}^s$ for Subset and Hamming Loss}

	\begin{thm_appendix}{4}[Optimize Subset Loss, Subset and Hamming Loss bounds]
	\label{thm:thm4}
		Assume the loss function $L = L_s$, where $L_s$ is defined in Eq.(4). Besides, \emph{Assumption 1} is satisfied. Then, for any $\delta > 0$, with probability at least $1 - \delta$ over $S$, the following generalization bounds in terms of \emph{Subset and Hamming Loss} hold for all $f \in \mathcal{F}$: 
		\begin{equation}
		\label{eq:al_sub_hal_sub}
			R_{0/1}^h(sgn \circ f) \leq R_{0/1}^s(sgn \circ f) \leq  \hat{R}^s_S(f) + 2 \sqrt{2} \rho \sqrt{\frac{c \Lambda^2 r^2}{n}} + 3B \sqrt{\frac{\log \frac{2}{\delta}}{2n}}.
		\end{equation}
	\end{thm_appendix}
\begin{proof}
	Since $L = L_s$, we can get its Lipschitz constant (i.e. $\rho$) and bounded value (i.e. $B$) from Lemma \ref{lem:lem1}. Then, applying Theorem \ref{thm:thm1}, and the inequality $R_{0/1}^h(sgn \circ f) \leq R_{0/1}^s(sgn \circ f) \leq R^s(f)$ induced from Lemma \ref{lem:lem2} (i.e. InEq.\eqref{eq:relation_hal_sub_1}), we can get this theorem.
\end{proof}

\section{Learning guarantees between Hamming and Ranking Loss}
\label{sec:hal_ral}
\subsection{The relationship between Hamming and Ranking Loss}
\label{sec:rela_hal_ral}
\begin{lem_appendix}{3}[The relationship between Hamming and Ranking Loss]
\label{lem:lem3}
	For the Hamming and Ranking Loss, the following inequality holds: 
	\begin{equation}
	\label{eq:relation_hal_ral_1}
		L_r^{0/1} (f(\mathbf{x}), \mathbf{y}) \leq c L_h^{0/1} (sgn \circ f(\mathbf{x}), \mathbf{y}) \leq c L_h (f(\mathbf{x}), \mathbf{y}).
	\end{equation}
	Further, if \emph{Assumption 2} is satisfied, the following inequality holds:
	\begin{equation}
	\label{eq:relation_hal_ral_2}
		L_h^{0/1} (t^* \circ f(\mathbf{x}), \mathbf{y}) \leq c L_r^{0/1} (f(\mathbf{x}), \mathbf{y}) \leq c L_r (f(\mathbf{x}), \mathbf{y}).
	\end{equation}
\end{lem_appendix}
\begin{proof}
	(1). For the first inequality, the following holds: 
	\begin{equation}
		\begin{split}
			& L_r^{0/1} (f(\mathbf{x}), \mathbf{y}) \leq L_r^{0/1} (sgn \circ f(\mathbf{x}), \mathbf{y}) \\
			& \qquad \qquad \qquad = \frac{1}{|Y^+||Y^-|} \sum_{p \in Y^+} \sum_{q \in Y^-}  [\![ sgn(f_p(\mathbf{x})) \leq sgn(f_q(\mathbf{x})) ]\!] \\
			& \qquad \qquad \qquad = \frac{1}{|Y^+| |Y^-|} \bigg [ |Y^-| \sum_{p \in Y^+} [\![ sgn (f_p(\mathbf{x}_i))  \neq 1 ]\!] + |Y^+| \sum_{q \in Y^-} [\![ sgn (f_q(\mathbf{x}_i))  \neq -1 ]\!] - \\
			& \qquad \qquad \qquad \qquad \qquad \bigg \{ \sum_{p \in Y^+} [\![ sgn (f_p(\mathbf{x}_i))  \neq 1 ]\!] \bigg \} \bigg \{ \sum_{q \in Y^-} [\![ sgn (f_q(\mathbf{x}_i))  \neq -1 ]\!] \bigg \} \bigg ] \\
			& \qquad \qquad \qquad \leq \frac{1}{|Y^+| |Y^-|} \bigg [ |Y^-| \sum_{p \in Y^+} [\![ sgn (f_p(\mathbf{x}_i))  \neq 1 ]\!] + |Y^+| \sum_{q \in Y^-} [\![ sgn (f_q(\mathbf{x}_i))  \neq -1 ]\!] \bigg ] \\
			& \qquad \qquad \qquad = \frac{\sum_{p \in Y^+} [\![ sgn (f_p(\mathbf{x}_i))  \neq 1 ]\!] }{|Y^+|} + \frac{\sum_{q \in Y^-} [\![ sgn (f_q(\mathbf{x}_i))  \neq -1 ]\!] }{|Y^-|} \\
			& \qquad \qquad \qquad \leq \frac{|Y^+| + |Y^-|}{\min\{|Y^+|,|Y^-|\}} \frac{\sum_{p \in Y^+} [\![ sgn (f_p(\mathbf{x}_i))  \neq 1 ]\!] + \sum_{q \in Y^-} [\![ sgn (f_q(\mathbf{x}_i))  \neq -1 ]\!]}{|Y^+| + |Y^-|} \\
			& \qquad \qquad \qquad = \frac{c}{\min\{|Y^+|,|Y^-|\}} L_h^{0/1} (sgn \circ f(\mathbf{x}), \mathbf{y}) \qquad \qquad \quad (|Y^+| + |Y^-| = c) \\
			& \qquad \qquad \qquad \leq \frac{c}{\min\{|Y^+|,|Y^-|\}} L_h(f(\mathbf{x}), \mathbf{y}) \\
			& \qquad \qquad \qquad \leq c L_h(f(\mathbf{x}), \mathbf{y}) \qquad \qquad \qquad \qquad \quad (1 \leq \min\{|Y^+|,|Y^-|\} \leq \frac{c}{2}) .
		\end{split}
	\end{equation}
In summary, the following holds:
	\begin{equation}
		L_r^{0/1} (f(\mathbf{x}), \mathbf{y}) \leq \frac{c}{\min\{|Y^+|,|Y^-|\}} L_h^{0/1} (sgn \circ f(\mathbf{x}), \mathbf{y}) \leq c L_h^{0/1} (sgn \circ f(\mathbf{x}), \mathbf{y}) \leq c L_h (f(\mathbf{x}), \mathbf{y}).
	\end{equation}	 
	(2). For the second inequality, since the oracle optimal thresholding function $t^*(f)$ exists, the following holds: 
	\begin{equation}
		\begin{split}
			& L_h^{0/1} (t^* \circ f(\mathbf{x}), \mathbf{y}) = \frac{\sum_{p \in Y^+} [\![ t^* (f_p(\mathbf{x}_i))  \neq 1 ]\!] + \sum_{q \in Y^-} [\![ t^* (f_q(\mathbf{x}_i))  \neq -1 ]\!]}{|Y^+| + |Y^-|} \\
			& \qquad \qquad \qquad \leq \frac{\sum_{p \in Y^+} [\![ t^* (f_p(\mathbf{x}_i))  \neq 1 ]\!] }{|Y^+|} + \frac{\sum_{q \in Y^-} [\![ t^* (f_q(\mathbf{x}_i))  \neq -1 ]\!] }{|Y^-|} \\
			& \qquad \qquad \qquad = \frac{1}{|Y^+| |Y^-|} \bigg [ |Y^-| \sum_{p \in Y^+} [\![ t^* (f_p(\mathbf{x}_i))  \neq 1 ]\!] + |Y^+| \sum_{q \in Y^-} [\![ t^* (f_q(\mathbf{x}_i))  \neq -1 ]\!] \bigg ] \\
			& \qquad \qquad \qquad \leq \frac{1}{|Y^+| |Y^-|} \bigg [ |Y^-| \sum_{p \in Y^+} [\![ t^* (f_p(\mathbf{x}_i))  \neq 1 ]\!] + |Y^+| \sum_{q \in Y^-} [\![ t^* (f_q(\mathbf{x}_i))  \neq -1 ]\!] \\
			& \qquad \qquad \qquad \qquad \qquad + \bigg \{ \sum_{p \in Y^+} [\![ t^* (f_p(\mathbf{x}_i))  \neq 1 ]\!] \bigg \} \bigg \{ \sum_{q \in Y^-} [\![ t^* (f_q(\mathbf{x}_i))  \neq -1 ]\!] \bigg \} \bigg ] \\
			& \qquad \qquad \qquad = \triangle .	
		\end{split}
	\end{equation}
	In the following, we need to go on the proof by three cases.
	
	Case (a). \ When $\sum_{p \in Y^+} [\![ t^* (f_p(\mathbf{x}_i))  \neq 1 ]\!] = 0$, the following holds:
		\begin{equation}
			\begin{split}
				& \triangle = \frac{1}{|Y^+| |Y^-|} \bigg [ |Y^+| \sum_{q \in Y^-} [\![ t^* (f_q(\mathbf{x}_i))  \neq -1 ]\!] \bigg ] \\
				& \quad \overset{\textcircled{1}}{\leq} |Y^+| L_r^{0/1} (f(\mathbf{x}), \mathbf{y}) \\
				& \quad \leq c L_r^{0/1} (f(\mathbf{x}), \mathbf{y}) \\
				& \quad \leq c L_r (f(\mathbf{x}), \mathbf{y}) .
			\end{split}
		\end{equation}
	For  the inequality \textcircled{1}, the last element in the predicted relevant label list (according to the non-ascending order) should be a real relevant label due to the optimality of $t^*(f)$. Thus, the minimum value of $L_r^{0/1} (f(\mathbf{x}), \mathbf{y})$ should be $\frac{\sum_{q \in Y^-} [\![ t^* (f_q(\mathbf{x}_i))  \neq -1 ]\!]}{|Y^+| |Y^-|}$. Hence, the inequality \textcircled{1} holds.
	
	Case (b). \ When $\sum_{q \in Y^+} [\![ t^* (f_q(\mathbf{x}_i))  \neq -1 ]\!] = 0$, the following holds:
		\begin{equation}
			\begin{split}
				& \triangle = \frac{1}{|Y^+| |Y^-|} \bigg [ |Y^-| \sum_{p \in Y^+} [\![ t^* (f_p(\mathbf{x}_i))  \neq 1 ]\!] \bigg ] \\
				& \quad \overset{\textcircled{2}}{\leq} |Y^-| L_r^{0/1} (f(\mathbf{x}), \mathbf{y}) \\
				& \quad \leq c L_r^{0/1} (f(\mathbf{x}), \mathbf{y}) \\
				& \quad \leq c L_r (f(\mathbf{x}), \mathbf{y}) .
			\end{split}
		\end{equation}
	For  the inequality \textcircled{2}, the first element in the predicted irrelevant label list (according to the non-ascending order) should be a real irrelevant label due to the optimality of $t^*(f)$. Thus, the minimum value of $L_r^{0/1} (f(\mathbf{x}), \mathbf{y})$ should be $\frac{\sum_{p \in Y^+} [\![ t^* (f_p(\mathbf{x}_i))  \neq 1 ]\!]}{|Y^+| |Y^-|}$. Hence, the inequality \textcircled{2} holds.
	
	Case (c). \ When $\sum_{p \in Y^+} [\![ t^* (f_p(\mathbf{x}_i))  \neq 1 ]\!] \neq 0$ and $\sum_{q \in Y^+} [\![ t^* (f_q(\mathbf{x}_i))  \neq -1 ]\!] = 0$, the following holds. Besides, for notation clarity, we first set
		\begin{equation}
			\begin{split}
				& \clubsuit = \frac{1}{|Y^+| |Y^-|} \bigg [ \sum_{p \in Y^+} [\![ t^* (f_p(\mathbf{x}_i))  \neq 1 ]\!] + \sum_{q \in Y^-} [\![ t^* (f_q(\mathbf{x}_i))  \neq -1 ]\!] \\
				& \qquad + \bigg \{ \sum_{p \in Y^+} [\![ t^* (f_p(\mathbf{x}_i))  \neq 1 ]\!] \bigg \}  \bigg \{ \sum_{q \in Y^-} [\![ t^* (f_q(\mathbf{x}_i))  \neq -1 ]\!] \bigg \} \bigg ] .
			\end{split}
		\end{equation}
Then, we have
		\begin{equation}
			\begin{split}
				& \triangle \leq \max\{|Y^+|, |Y^-|\}  \times \clubsuit \\
				& \quad \overset{\textcircled{3}}{\leq} c L_r^{0/1} (f(\mathbf{x}), \mathbf{y}) \\
				& \quad \leq c L_r (f(\mathbf{x}), \mathbf{y}) .
			\end{split}
		\end{equation}
	In this case, due to the optimality of $t^*(f)$, the last element in the predicted relevant label list (according to the non-ascending order) should be a real relevant label and the first element in the predicted irrelevant label list should be a real irrelevant label. Then, for the inequality \textcircled{3},  the minimum value of $L_r^{0/1} (f(\mathbf{x}), \mathbf{y})$ is $\clubsuit$, where in the predicted relevant list, all the real relevant labels (except the last element) have bigger score than other real irrelevant labels, and in the predicted irrelevant set, all the real relevant labels have bigger score than other real irrelevant labels (except the first element). Thus the inequality holds.
	
	In summary, the following always holds:
	\begin{equation}
		L_h^{0/1} (t^* \circ f(\mathbf{x}), \mathbf{y}) \leq \max\{|Y^+|, |Y^-|\} L_r^{0/1} (f(\mathbf{x}), \mathbf{y}) \leq c L_r^{0/1} (f(\mathbf{x}), \mathbf{y})  \leq c L_r (f(\mathbf{x}), \mathbf{y}) .
	\end{equation}
\end{proof}

\subsection{Learning guarantee of Algorithm $\mathcal{A}^h$ for Ranking Loss}

	\begin{thm_appendix}{5}[Optimize Hamming Loss, Ranking Loss bound]
	\label{thm:thm5}
		Assume the loss function $L = c L_h$, where $L_h$ is defined in Eq.(2). Besides, \emph{Assumption 1} is satisfied. Then, for any $\delta > 0$, with probability at least $1 - \delta$ over $S$, the following generalization bound in terms of \emph{Ranking Loss} holds for all $f \in \mathcal{F}$:
		\begin{equation}
			R_{0/1}^r(f) \leq c R_{0/1}^h(sgn \circ f) \leq c \hat{R}^h_S(f) + 2 \sqrt{2} \rho c \sqrt{\frac{\Lambda^2 r^2}{n}} + 3Bc \sqrt{\frac{\log \frac{2}{\delta}}{2n}}.
		\end{equation}
	\end{thm_appendix}
\begin{proof}
	Since $L = c L_h$, we can get its Lipschitz constant (i.e. $\rho \sqrt{c}$) and bounded value (i.e. $B c$) from Lemma \ref{lem:lem1}. Then, applying Theorem \ref{thm:thm1}, and the inequality $R_{0/1}^r(f) \leq c R_{0/1}^h(sgn \circ f) \leq c R^h(f)$ induced from Lemma \ref{lem:lem3} (i.e. InEq.\eqref{eq:relation_hal_ral_1}), we can get this theorem.
\end{proof}
	
\subsection{Learning guarantee of Algorithm $\mathcal{A}^r$ for Ranking Loss}
	\begin{thm_appendix}{6}[Optimize Ranking Loss, Ranking Loss bound]
	\label{thm:thm6}
		Assume the loss function $L =  L_r$, where $L_r$ is defined in Eq.(6). Besides, \emph{Assumption 1} is satisfied. Then, for any $\delta > 0$, with probability at least $1 - \delta$ over $S$, the following generalization bound in terms of \emph{Ranking Loss} holds for all $f \in \mathcal{F}$:
		\begin{equation}
			R_{0/1}^r (f) \leq \hat{R}^r_S(f) + 2 \sqrt{2} \rho \sqrt{\frac{c \Lambda^2 r^2}{n}} + 3B \sqrt{\frac{\log \frac{2}{\delta}}{2n}}.
		\end{equation}
	\end{thm_appendix}
\begin{proof}
	Since $L = L_r$, we can get its Lipschitz constant (i.e. $\rho$) and bounded value (i.e. $B $) from Lemma \ref{lem:lem1}. Then, applying Theorem \ref{thm:thm1} and the inequality $R_{0/1}^r(f) \leq R^r(f)$, we can get this theorem.
\end{proof}

\subsection{Generalization bound for the classical Rank-SVM}
	\begin{corollary_appendix}{2}[Rank-SVM \cite{elisseeff2001kernel}, Ranking Loss bound]
	\label{cor:cor2}
		Assume the loss function $L = L_r$, where $L_r$ is defined in Eq.(6) and the base loss function is the hinge loss $\ell(u) = \max(0, 1 - u)$. Besides, \emph{Assumption 1} is satisfied. Then, for any $\delta > 0$, with probability at least $1 - \delta$ over $S$, the following generalization bound in terms of \emph{Ranking Loss} holds for all $f \in \mathcal{F}$:
		\begin{equation}
			R_{0/1}^r (f) \leq \hat{R}^r_S(f) + 2 \sqrt{2} \sqrt{\frac{c \Lambda^2 r^2}{n}} + 3B \sqrt{\frac{\log \frac{2}{\delta}}{2n}}.
		\end{equation}
	\end{corollary_appendix}
\begin{proof}
	Since the hinge loss $\ell(u) = \max(0, 1 - u)$ is $1$-Lipschitz, we can straightforwardly get this corollary by applying Theorem \ref{thm:thm6}.
\end{proof}

\subsection{Learning guarantee of Algorithm $\mathcal{A}^r$ for Hamming Loss}
	\begin{thm_appendix}{7}[Optimize Ranking Loss, Hamming Loss bound]
	\label{thm:thm7}
		Assume the loss function $L = c L_r$, where $L_r$ is defined in Eq.(6). Besides, \emph{Assumption 1} and \emph{2} are satisfied. Then, for any $\delta > 0$, with probability at least $1 - \delta$ over $S$, the following generalization bound in terms of \emph{Hamming Loss} holds for all $f \in \mathcal{F}$:
		\begin{equation}
			R_{0/1}^h (t^* \circ f) \leq c R_{0/1}^r(f) \leq c \hat{R}^r_S(f) + 2 \sqrt{2} \rho \sqrt{\frac{c^3 \Lambda^2 r^2}{n}} + 3Bc \sqrt{\frac{\log \frac{2}{\delta}}{2n}}.
		\end{equation}
	\end{thm_appendix}
\begin{proof}
	Since $L = c L_r$, we can get its Lipschitz constant (i.e. $\rho c$) and bounded value (i.e. $B c$) from Lemma \ref{lem:lem1}. Then, applying Theorem \ref{thm:thm1}, and the inequality $R_{0/1}^h (t^* \circ f) \leq c R_{0/1}^r(f) \leq c R^r(f)$ induced from Lemma \ref{lem:lem3} (i.e. InEq.\eqref{eq:relation_hal_ral_2}), we can get this theorem.
\end{proof}

\section{Learning guarantees between Subset and Ranking Loss}

\subsection{The relationship between Subset and Ranking Loss}
In this section, we first analyze the relationships between Subset and Ranking Loss. Then, we analyze the learning guarantee of $\mathcal{A}^s$ on the Ranking Loss measure. Last, we analyze the learning guarantee of $\mathcal{A}^r$ on the Subset Loss measure.
\begin{lem_appendix}{4}[The relationship between Subset and Ranking Loss]
\label{lem:lem4}
	For the Subset and Ranking Loss, the following inequality holds:
	\begin{equation}
	\label{eq:relation_sub_ral_1}
		L_r^{0/1} (f(\mathbf{x}), \mathbf{y}) \leq L_s^{0/1} (sgn \circ f(\mathbf{x}), \mathbf{y}) \leq L_s (f(\mathbf{x}), \mathbf{y}).
	\end{equation}
	Further, if \emph{Assumption 2} is satisfied, the following inequality holds\footnotemark[1]: 
	\begin{equation}
	\label{eq:relation_sub_ral_2}
		L_s^{0/1} (t^* \circ f(\mathbf{x}), \mathbf{y}) \leq c^2 L_r^{0/1}  (f(\mathbf{x}), \mathbf{y}) \leq c^2 L_r (f(\mathbf{x}), \mathbf{y}).
	\end{equation}
\end{lem_appendix}
\footnotetext[1]{Note that, this inequality depends on $O(c^2)$ and we believe it can be improved.}
\begin{proof}
	For the first inequality, the following holds:
	\begin{equation}
		\begin{split}
			& L_r^{0/1} (f(\mathbf{x}), \mathbf{y}) \leq L_r^{0/1} (sgn \circ f(\mathbf{x}), \mathbf{y}) \\
			& \qquad \qquad \qquad \leq L_s^{0/1} (sgn \circ f(\mathbf{x}), \mathbf{y}) \qquad \quad \qquad (the \ property \ of \ subset \ and \ ranking \ loss) \\
			& \qquad \qquad \qquad \leq L_s ( f(\mathbf{x}), \mathbf{y}) \qquad \qquad \qquad \qquad (surrogate \ loss \ upper \ bounds \ 0/1 \ loss) .
		\end{split}
	\end{equation}
	For the second inequality, we can get it from Lemma \ref{lem:lem2} and \ref{lem:lem3}, i.e. 
	\begin{equation}
		\begin{split}
			& L_s^{0/1} (t^* \circ f(\mathbf{x}), \mathbf{y}) \leq c L_h^{0/1} (t^* \circ f(\mathbf{x}), \mathbf{y}) \qquad \quad \qquad (\emph{Lemma} \ \ref{lem:lem2}) \\
			& \quad \qquad \qquad \qquad \leq c^2 L_r^{0/1} (f(\mathbf{x}), \mathbf{y}) \qquad \qquad \qquad \quad (\emph{Lemma} \ \ref{lem:lem3}) \\
			& \quad \qquad \qquad \qquad \leq c^2 L_r (f(\mathbf{x}), \mathbf{y}) \qquad \qquad \qquad \quad (surrogate \ loss \ upper \ bounds \ 0/1 \ loss) .
		\end{split}
	\end{equation}
\end{proof}
From this lemma, we can observe that when optimizing Subset Loss with its surrogate loss function, it also optimizes an upper bound for Ranking Loss. Similarly, when optimizing Ranking Loss with its surrogate, it also optimizes an upper bound for Subset Loss which depends on $O(c^2)$.

\subsection{Learning guarantee of Algorithm $\mathcal{A}^s$ for Ranking Loss}
$\mathcal{A}^s$ has a learning guarantee w.r.t Ranking Loss as follows.
	\begin{thm_appendix}{8}[Optimize Subset Loss, Ranking Loss bound]
	\label{thm:thm8}
		Assume the loss function $L = L_s$, where $L_s$ is defined in Eq.(4). Besides, \emph{Assumption 1} is satisfied. Then, for any $\delta > 0$, with probability at least $1 - \delta$ over $S$, the following generalization bound in terms of \emph{Ranking Loss} holds for all $f \in \mathcal{F}$.
		\begin{equation}
			R_{0/1}^r(f) \leq R_{0/1}^s(sgn \circ f) \leq \hat{R}^s_S(f) + 2 \sqrt{2} \rho \sqrt{\frac{c \Lambda^2 r^2}{n}} + 3B \sqrt{\frac{\log \frac{2}{\delta}}{2n}}
		\end{equation}
	\end{thm_appendix}
\begin{proof}
	Since $L = L_s$, we can get its Lipschitz constant (i.e. $\rho$) and bounded value (i.e. $B$) from Lemma \ref{lem:lem1}. Then, applying Theorem \ref{thm:thm1}, and the inequality $R_{0/1}^r(f) \leq R_{0/1}^s(sgn \circ f) \leq R^s(f)$ induced from Lemma \ref{lem:lem4} (i.e. InEq.\eqref{eq:relation_sub_ral_1}), we can get this theorem.
\end{proof}
From this theorem, we can observe that $\mathcal{A}^s$ has a generalization bound w.r.t. Ranking Loss depending on $O(\sqrt{c})$. Intuitively, the learning guarantee of $\mathcal{A}^s$ for Ranking Loss comes from its learning guarantee for Subset Loss.

\subsection{Learning guarantee of Algorithm $\mathcal{A}^r$ for Subset Loss}
$\mathcal{A}^r$ has a learning guarantee w.r.t Subset Loss as follows.
	\begin{thm_appendix}{9}[Optimize Ranking Loss, Subset Loss bound]
	\label{thm:thm9}
		Assume the loss function $L = c^2 L_r$, where $L_r$ is defined in Eq.(4). Besides, \emph{Assumption 1} and \emph{2} are satisfied. Then, for any $\delta > 0$, with probability at least $1 - \delta$ over $S$, the following generalization bound in terms of \emph{Subset Loss} holds for all $f \in \mathcal{F}$.
		\begin{equation}
			R_{0/1}^s (t^* \circ f) \leq c^2 R_{0/1}^r(f) \leq c^2 \hat{R}^r_S(f) + 2 \sqrt{2} \rho \sqrt{\frac{c^5 \Lambda^2 r^2}{n}} + 3Bc^2 \sqrt{\frac{\log \frac{2}{\delta}}{2n}}
		\end{equation}
	\end{thm_appendix}
\begin{proof}
	Since $L = c^2 L_r$, we can get its Lipschitz constant (i.e. $\rho c^2$) and bounded value (i.e. $B c^2$) from Lemma \ref{lem:lem1}. Then, applying Theorem \ref{thm:thm1}, and the inequality $R_{0/1}^s (t^* \circ f) \leq c^2 R_{0/1}^r(f) \leq c^2 R^r(f)$ induced from Lemma \ref{lem:lem4} (i.e. InEq.\eqref{eq:relation_sub_ral_2}), we can get this theorem.
\end{proof}
From this theorem, we can observe that $\mathcal{A}^r$ has a generalization bound for Subset Loss which depends on $O(c^2)$. When the label space is large, $\mathcal{A}^r$ would perform poorly for Subset Loss.

\section{Experiments}
\subsection{Optimization}
For $\mathcal{A}^h$ and $\mathcal{A}^s$, they are both convex optimization problems, which lots of off-the-shelf optimization algorithms can be employed to solve. Here we utilize the recent stochastic algorithm SVRG-BB \cite{tan2016barzilai} to efficiently train the linear models. For clarity, $\mathcal{A}^h$ and $\mathcal{A}^s$ can both be denoted by $\min_{\mathbf{W}} \ \frac{1}{n} \sum_{i}^{n} g_i(\mathbf{W})$, where $g_i(\mathbf{W}) = L (\mathbf{W}^\top \mathbf{x}_i, \mathbf{y}_i) + \lambda \| \mathbf{W}\|^2$ and $L$ denotes $L_h$ or $L_s$. Furthermore, the detailed optimization algorithm is summarized in Algorithm \ref{alg:algorithm1}.

\begin{algorithm}[!htbp]
\caption{SVRG-BB to solve $\mathcal{A}^h$ (or $\mathcal{A}^s$)}
\label{alg:algorithm1}
\textbf{Input}: initial step size $\eta_0$, update frequency $m$ \\
\textbf{Output}: $\mathbf{W}^* \in \mathbb{R}^{d \times c}$
\begin{algorithmic}[1] 
\STATE Initialize $\widetilde{\mathbf{W}}_0$ as zero matrix
\FOR{$s = 0, 1, ...$}
	\STATE $\mathbf{G}_s = \frac{1}{n} \sum_{i=1}^{n} \nabla g_i(\widetilde{\mathbf{W}}_s)$
	\IF {$s > 0$}
		\STATE $\eta_s = \frac{1}{d} \| \widetilde{\mathbf{W}}_s - \widetilde{\mathbf{W}}_{s-1} \|_F^2 / \rm Tr ( (\widetilde{\mathbf{W}}_s - \widetilde{\mathbf{W}}_{s-1})^\top (\mathbf{G}_s - \mathbf{G}_{s-1}) )$ 
	\ENDIF
	\STATE $\mathbf{W}_0 = \widetilde{\mathbf{W}}_s$
	\FOR{$t = 0, 1, ..., m - 1$}
	\STATE Randomly pick $i_t \in \{1, 2,..., n\}$
	\STATE $\mathbf{W}_{t+1} = \mathbf{W}_{t} - \eta_s ( \nabla g_{i_t}(\mathbf{W}_t) - \nabla g_{i_t}(\widetilde{\mathbf{W}}_s) + \mathbf{G}_s )$
	\ENDFOR
	\STATE $\widetilde{\mathbf{W}}_{s+1} = \mathbf{W}_{m}$
\ENDFOR
\STATE \textbf{return} $\mathbf{W}^* = \widetilde{\mathbf{W}}_{s+1}$
\end{algorithmic}
\end{algorithm}

\end{document}